\documentclass{article}

\usepackage[preprint]{neurips_2026}

\usepackage[utf8]{inputenc}
\usepackage[T1]{fontenc}
\usepackage{hyperref}
\usepackage{url}
\usepackage{booktabs}
\usepackage{amsfonts}
\usepackage{amsmath,amssymb,amsthm}
\usepackage{nicefrac}
\usepackage{microtype}
\usepackage{xcolor}
\usepackage{graphicx}
\usepackage{enumitem}
\usepackage{cleveref}
\usepackage{algorithm}
\usepackage{algorithmic}

\newtheorem{theorem}{Theorem}[section]
\newtheorem{proposition}[theorem]{Proposition}

\newtheorem{lemma}[theorem]{Lemma}
\newtheorem{definition}[theorem]{Definition}
\newtheorem{axiom}{Axiom}
\newtheorem{remark}[theorem]{Remark}

\newcommand{\R}{\mathbb{R}}
\newcommand{\Z}{\mathbb{Z}}

\newcommand{\Od}{\mathrm{O}(d)}

\newcommand{\SO}{\mathrm{SO}}
\newcommand{\GL}{\mathrm{GL}}
\newcommand{\diag}{\mathrm{diag}}
\newcommand{\Att}{\mathrm{Att}}
\newcommand{\Comm}{\mathrm{Comm}}

\title{Journey Operators for Structured Multi-Axis Composition}

\author{Mahesh Godavarti\\
A Carrot, Inc}

\begin{document}

\maketitle

\begin{abstract}
Many kinds of data have structure along one or more axes: words in a sentence, pixels in an image, nodes in a tree, frames in audio, or cells in a 3D volume.
Along one axis, order matters: ``the dog bit the man'' is different from ``the man bit the dog.''
Across independent axes, however, neither composition nor movement should depend on the order of axes: in an image, composing right then down should give the same result as composing down then right, and moving right then down should describe the same relative position as moving down then right.

We develop a framework for modeling this kind of multi-axis structure.
Each data item carries its content together with a small transformation for each axis.
A path connecting two positions defines a \emph{journey}; the \emph{journey operator} is the product of per-axis transformations along that path, governing both how data composes along the path and how relative position is described.
When the transformations are fixed, our framework recovers Rotary Position Embedding (RoPE) and its multi-dimensional variants.
When they depend on the data, the model gains a content-adaptive positional inductive bias.

We show exactly when these paths are well-defined: both composition and movement across axes are path-independent precisely when the axis transformations commute.
We also prove that, under the stated toral-frame symmetry, cocycle, bilinearity, and norm-preservation assumptions, the resulting pairwise scoring rule must take the form of block-wise rotations, explaining why RoPE-like methods arise naturally.

Finally, we use this theory to design JoFormer, a model for value aggregation, and relate it to attention and state-space models (SSMs).

Initial experiments across vision, language, and length generalization suggest that these inductive biases can have observable consequences in practice.
\end{abstract}

\section{Introduction}\label{sec:intro}

Any data that can be indexed---sequences, images, audio, volumetric arrays, trees---composes along multiple axes.
In the one-dimensional (1D) case, a sequence like $[a\; b\; c\; d]$ can be formed by composing sub-sequences---for example, by concatenating $[a\; b]$ with $[c\; d]$, or $[a]$ with $[b\; c\; d]$. Algebraic tools such as non-commutative semigroups or free groups have long provided principled ways to model such 1D compositional structure~\citep{Rudolph2010}.

In contrast, there exists no broadly accepted algebraic framework for modeling \emph{two-dimensional (2D) composition}. Consider the array
$\bigl[\begin{smallmatrix} a & b \\ c & d \end{smallmatrix}\bigr]$:
it can be composed vertically, by stacking $[a\; b]$ over $[c\; d]$, or horizontally, by placing $\bigl[\begin{smallmatrix} a \\ c \end{smallmatrix}\bigr]$ next to $\bigl[\begin{smallmatrix} b \\ d \end{smallmatrix}\bigr]$.
Such multiple valid composition paths do not fit neatly into existing algebraic systems, which are typically designed for linear (1D) structured data.
Even in the 1D case, aligning algebraic formalisms with the needs of modern machine learning architectures can be problematic---for example, representing tokens as matrices instead of vectors breaks the core assumptions of the attention mechanism in transformers, which relies fundamentally on vector operations.
This tension reveals a deeper issue: the absence of an algebraic framework that can both align with vector-based learning architectures and naturally support composition along multiple axes.

We present such an algebraic framework, built on one operation that models composition in 1D and extends naturally to multiple axes.
In 1D, each data point carries both content and a single orthogonal transformation---its \emph{axis-step generator}.
Composing two data points applies the first point's axis-step generator to rotate the second's content, and multiplies their axis-step generators together.
This is associative but non-commutative: the first element's axis-step generator acts on the second's content, not vice versa, so order matters.
Composing an entire sequence accumulates these transformations: each value is transformed by the product of all preceding axis-step generators (\Cref{ssec:composition}).

In nD, each datum carries one axis-step generator per axis.
Composing two data points along a given axis applies that axis's axis-step generator to rotate the second point's content into the first's frame, accumulates the axis-step generator for that axis, and leaves all other axes unchanged.
For multi-axis data like images, we want two kinds of path independence: composition itself should not depend on the order of axes (composing right-then-down equals down-then-right), and any movement should depend only on the initial and final positions, not on the path taken.
This is the inductive bias---it encodes the assumption that independent axes of real-world data commute.
Our framework captures both: when per-axis axis-step generators commute (the \emph{flat} regime), composition is path-independent and the \emph{journey}---the relative transformation between two positions---depends only on displacement (\Cref{thm:interchange}).
In 1D with a single axis-step generator, this recovers RoPE; content-dependent axis-step generators yield data-dependent composition (JoFormer-projected).

This compositional structure is absent from standard attention, which aggregates values without non-commutative composition.
Bringing composition into attention---applying the journey operator to values before aggregation---is the architecture (JoFormer) developed in \Cref{sec:joformer}.

Initial experiments across vision, language, and length generalization are consistent with the framework's value (\Cref{sec:experiments}).
With fixed rotations, the framework is a generalized DFT with learned frequencies.
RoPE~\citep{su2021roformer}, 2D-RoPE~\citep{heo2024rope2d}, tree PEs~\citep{shiv2019novel}, and RotatE~\citep{sun2019rotate} emerge as special cases; the value-path formula provides common notation for standard attention, SSMs, and JoFormer (\Cref{app:value-path}).

\paragraph{Contributions.}
\begin{enumerate}[nosep]
\item We introduce a \emph{compositional framework} (\Cref{ssec:composition}): in 1D, each data point carries content and a single axis-step generator (an orthogonal transformation); in nD, each data point carries one axis-step generator per axis. Composition along any axis is a single associative operation.
\item From this framework we derive several theoretical results:
  \begin{enumerate}[nosep,label=(\alph*)]
  \item path-independent composition requires commuting axis-step generators (\Cref{thm:interchange});
  \item under natural axioms (bilinearity, toral frame invariance, cocycle, norm preservation), the only compatible mechanism is block-diagonal $\SO(2)^{d/2}$ rotation (\Cref{thm:toral}); if the full orthogonal group $\Od$ is the symmetry instead, positional structure collapses (\Cref{thm:od-impossibility});
  \item in the flat (commutative) regime, value transport via the journey operator is translation equivariant (\Cref{thm:v-rotation});
  \item the DFT is a special case of the compositional embedding, and learned frequencies yield a generalized Fourier transform (\Cref{prop:dft}).
  \end{enumerate}
\item These constraints lead to \emph{JoFormer} (\Cref{sec:joformer}), an architecture that brings composition into attention via data-dependent projected angles and value-side rotation.
\item A \emph{value-path hierarchy} (\Cref{app:value-path}) places standard attention, SSMs, and JoFormer on a common spectrum---the SSM recurrence unrolls into the same value-path formula (\Cref{prop:ssm-bridge})---clarifying what each family composes and what it discards.
\item Experiments on vision, language modeling, and length generalization suggest the framework has practical value (\Cref{sec:experiments}).
\end{enumerate}

\section{Algebraic Framework and Theoretical Analysis}\label{sec:framework}

\subsection{Compositional Data Points}\label{ssec:composition}

Everything in this paper flows from one operation.
In 1D, each data point is a pair $(\mathbf{v}, R)$: content $\mathbf{v} \in \R^d$ plus an orthogonal transformation $R \in G \leq \mathrm{O}(d)$ (the group of norm-preserving $d \times d$ matrices)---its \emph{axis-step generator}.
Given two such points $(\mathbf{a}, A)$ and $(\mathbf{b}, B)$---where $\mathbf{a}, \mathbf{b} \in \R^d$ are contents and $A, B \in G$ are axis-step generators---their composition is:
\begin{equation}\label{eq:composition}
(\mathbf{a}, A) \circ (\mathbf{b}, B) \;:=\; (\mathbf{a} + A\mathbf{b},\; AB).
\end{equation}
The content of the composite is $\mathbf{a}$ plus $\mathbf{b}$ rotated into $\mathbf{a}$'s frame by $A$; the axis-step generator of the composite is the product $AB$.
This operation is associative with identity $(\mathbf{0}, I)$, but non-commutative: the first element's axis-step generator acts on the second's content, so order matters.

\paragraph{Axis-specific composition.}
For data indexed by $D$ axes, each datum carries content and one axis-step generator per axis: $(\mathbf{v}, R_1, \ldots, R_D)$.
Given two data points with contents $\mathbf{a}, \mathbf{b} \in \R^d$ at axis-$k$ positions $n_k$ and $m_k$ respectively, composing along axis $k$ applies the first point's axis-$k$ generator to rotate the second's content and accumulates that axis-step generator, leaving other axes unchanged:
\begin{align*}
&(\mathbf{a}, R_1^{n_1}, \ldots, R_k^{n_k}, \ldots, R_D^{n_D}) \circ_k (\mathbf{b}, R_1^{n_1}, \ldots, R_k^{m_k}, \ldots, R_D^{n_D}) \\
&\qquad= (\mathbf{a} + R_k^{n_k}\mathbf{b},\; R_1^{n_1}, \ldots, R_k^{n_k + m_k}, \ldots, R_D^{n_D}).
\end{align*}
Crucially, $\circ_k$ is defined only when the two operands agree on all axes $i \neq k$---they must share the same exponents $(n_1, \ldots, n_{k-1}, n_{k+1}, \ldots, n_D)$.
Composition therefore operates over \emph{tiles}: 1D slices of the grid at fixed coordinates on the remaining $D{-}1$ axes.
One cannot, for instance, compose elements from different rows of a 2D grid along the column axis.
When $R_k R_l = R_l R_k$ for all axis pairs, the regime is \emph{flat}: composition is path-independent (composing along axis $k$ then $l$ gives the same result as $l$ then $k$), and the relative transformation between two positions depends only on their displacement, not on which path connects them (\Cref{thm:interchange}).
Both properties follow from the single condition of commuting axis-step generators.
When axis-step generators do not commute, the regime is \emph{curved}: composition depends on the order of axes, and relative transformations depend on absolute position.

\paragraph{Sequence unrolling.}
In 1D, composing a sequence $[e_1, \ldots, e_T]$ with $e_t = (\mathbf{v}_t, R_t)$ gives:
\begin{equation}\label{eq:unrolling}
e_1 \circ e_2 \circ \cdots \circ e_T \;=\; \Bigl(\,\sum_{i=1}^{T} \Bigl(\prod_{j=1}^{i-1} R_j\Bigr) \mathbf{v}_i,\;\; \prod_{j=1}^{T} R_j\Bigr).
\end{equation}
The content part $\sum_i (\prod_{j<i} R_j)\mathbf{v}_i$ is a weighted sum where each value is rotated by all preceding transformations---position is encoded through accumulated rotation.

\paragraph{The journey operator.}
The \emph{journey operator} from position $j$ to position $k$ is defined as the composition of axis-step generators along the path connecting them.
In 1D, traveling from $j$ to $k$ (with $j > k$) means composing the generators at positions $k, k{+}1, \ldots, j{-}1$ in sequence.

\paragraph{Absolute operators.}
Separately, define the \emph{absolute operator} at position $i$ as $A_i = \prod_{j=1}^{i-1} R_j$---so $A_1 = I$, $A_2 = R_1$, $A_3 = R_1 R_2$, and so on.
The unrolled content~\eqref{eq:unrolling} can then be rewritten as $\sum_i A_i \mathbf{v}_i$: each value $\mathbf{v}_i$ is expressed in the global frame (position~1's frame) via $A_i$.

\paragraph{Journey in terms of absolute operators.}
We can show that the path-defined journey equals $A_k^{-1} A_j$.
To see this: the absolute operator $A_j$ maps $\mathbf{v}_j$ into the global frame, and $A_k^{-1}$ maps from the global frame into position $k$'s frame.
Their composition $P_{j \to k} = A_k^{-1} A_j$ therefore transforms $\mathbf{v}_j$ into $k$'s frame---exactly what the path-defined journey does.
When all axis-step generators are identical ($R_t = R$), the absolute operator is $A_i = R^{i-1}$ and the journey reduces to $P_{j \to k} = R^{-(k-1)} R^{j-1} = R^{j-k}$, a pure function of displacement.
In nD, the absolute operator at position $\mathbf{n}$ is $A_\mathbf{n} = R_1^{n_1} \cdots R_D^{n_D}$; the formal multi-axis treatment is in \Cref{ssec:path-transport}.

\paragraph{Value-path formula (1D).}
The unrolled content gives a sum of rotated values, each weighted equally.
In an attention mechanism, different positions contribute differently: position $j$'s value is weighted by an attention weight $\alpha_{kj}$ (how much position $k$ attends to position $j$).
Combining the journey operator with selective attention yields the value-path formula:
\begin{equation}\label{eq:unified}
\mathbf{c}_k = \sum_{j} \alpha_{kj}\, P_{j \to k}\, \mathbf{v}_j,
\end{equation}
where $P_{j \to k} = A_k^{-1} A_j$ is the journey from $j$ to $k$.
This notation unifies standard attention ($P = I$, no compositional structure), SSMs ($P = \prod A_{x_t}$, content-dependent recurrence with uniform weights), and JoFormer ($P = A_k^{-1} A_j$ with full attention weights).

\paragraph{2D composition and value-path formula.}
For a 2D grid of data points $e_{(i,j)} = (\mathbf{v}_{(i,j)}, R_x, R_y)$, we compose along rows first (1D composition along $x$): each row $i$ gives $\sum_j R_x^{\,j}\, \mathbf{v}_{(i,j)}$ by the 1D unrolling~\eqref{eq:unrolling}.
Composing the row results along columns (1D composition along $y$) applies $R_y^{\,i}$ to each row's content:
\[
\sum_{i} R_y^{\,i} \sum_{j} R_x^{\,j}\, \mathbf{v}_{(i,j)}
\;=\; \sum_{i,j} R_y^{\,i}\, R_x^{\,j}\, \mathbf{v}_{(i,j)}
\;=\; \sum_{i,j} A_{(i,j)}\, \mathbf{v}_{(i,j)},
\]
where $A_{(i,j)} = R_y^{\,i}\, R_x^{\,j}$ is the absolute operator at position $(i,j)$.
When the axis-step generators commute, composing columns-then-rows gives the same result (since $R_y^{\,i} R_x^{\,j} = R_x^{\,j} R_y^{\,i}$).
This generalizes directly to $D$ axes: composing along each axis in turn yields $\sum_{\mathbf{n}} A_{\mathbf{n}}\, \mathbf{v}_{\mathbf{n}}$ with $A_{\mathbf{n}} = \prod_{k=1}^D R_k^{\,n_k}$, independent of the order of axes when generators commute.
The journey from $(i', j')$ to $(i, j)$ is:
\[
P_{(i',j') \to (i,j)} = A_{(i,j)}^{-1}\, A_{(i',j')} = R_x^{-j}\, R_y^{-i}\, R_y^{\,i'}\, R_x^{\,j'}.
\]
When the axis-step generators commute ($R_x R_y = R_y R_x$), this simplifies to $R_y^{\,i'-i}\, R_x^{\,j'-j}$---a pure function of displacement.
The value-path formula becomes:
\begin{equation}\label{eq:unified-2d}
\mathbf{c}_{(i,j)} = \sum_{(i',j')} \alpha_{(i,j),(i',j')}\; R_y^{\,i'-i}\, R_x^{\,j'-j}\; \mathbf{v}_{(i',j')}.
\end{equation}
This generalizes directly to $D$ axes: the absolute operator at position $\mathbf{n} = (n_1, \ldots, n_D)$ is $A_{\mathbf{n}} = \prod_{k=1}^D R_k^{\,n_k}$, and the journey (with commuting axis-step generators) is the multi-axis displacement operator
$P_{\mathbf{n}' \to \mathbf{n}} = \prod_{k=1}^D R_k^{\,n'_k - n_k}$.

The hierarchy is developed in \Cref{app:value-path}.

\subsection{Path Transport and Commutativity}\label{ssec:path-transport}

We now formalize how composing axis-step generators along different paths may or may not yield the same result.

\begin{definition}[Multi-axis composition and path transport]\label{def:position-model}
For $D$ axes with axis-step generators $R_1, \ldots, R_D \in G \leq \mathrm{O}(d)$, a position $\mathbf{n} = (n_1, \ldots, n_D) \in \Z^D$ has absolute operator $A_\mathbf{n} = R_1^{n_1} \cdots R_D^{n_D}$ (as in \Cref{ssec:composition}).
For a word $\pi = k_1 \cdots k_m$ over axes $\{1, \ldots, D\}$ (a path), define the \emph{path transport}:
\begin{equation}\label{eq:path-transport}
T(\pi) = R_{k_m} \cdots R_{k_1}.
\end{equation}
\end{definition}

The path transport $T(\pi)$ composes generators step by step; different orderings of the same axis steps may yield different composites---the transport is \emph{path-dependent} in general.
Under the canonical axis ordering, the composite transport from $\mathbf{m}$ to $\mathbf{n}$ evaluates to $A_\mathbf{n}^{-1} A_\mathbf{m}$, but this equals the transport along every other path only when generators commute.
This motivates the central theorem.
\begin{theorem}[Path independence $\Leftrightarrow$ commuting generators]\label{thm:interchange}
If you compose along a multi-axis grid using per-axis axis-step generators, the result is independent of the order you take the steps (path-independent) if and only if the axis-step generators commute.
When they commute, the journey operator $P_{\mathbf{m} \to \mathbf{n}} = A_\mathbf{n}^{-1} A_\mathbf{m}$ depends only on displacement---it is fully determined by composition.
Formal statement and proof in \Cref{app:interchange-proof}.
\end{theorem}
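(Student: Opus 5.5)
We first make the informal statement precise. \emph{Path independence} will mean that for any two words $\pi, \pi'$ over $\{1,\ldots,D\}$ with the same multiset of axis steps (the same displacement $\mathbf{n}-\mathbf{m}$), we have $T(\pi) = T(\pi')$. We allow inverse steps $k^{-1}$ with $T$ extended by $R_k^{-1}$, so that walks in $\Z^D$ are covered. Composition path independence means that $\circ_k$ followed by $\circ_l$ agrees with $\circ_l$ followed by $\circ_k$ on a $2\times 2$ tile. By the 2D unrolling computation of \Cref{ssec:composition}, this reduces to equality of the absolute operators $R_l^{a} R_k^{b}$ and $R_k^{b} R_l^{a}$ appearing as coefficients. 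We prove the two directions separately and then derive the displacement formula.

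\textbf{Necessity.} Take the two elementary paths $\pi = k\,l$ and $\pi' = l\,k$ from $\mathbf{m}$ to $\mathbf{m}+e_k+e_l$. By \eqref{eq:path-transport}, $T(\pi) = R_l R_k$ and $T(\pi') = R_k R_l$. Path independence therefore forces $R_k R_l = R_l R_k$ for every pair $k \neq l$. For composition, consider the $2\times 2$ array with contents supported at the single corner $(1,1)$, taking $\mathbf{v}$ arbitrary. Rows-then-columns yields $R_y R_x \mathbf{v}$ and columns-then-rows yields $R_x R_y \mathbf{v}$. Equality for all $\mathbf{v}$ gives commutation.

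\textbf{Sufficiency.} Suppose all $R_k$ pairwise commute. Then the subgroup they generate is abelian, so any product of the $R_k^{\pm 1}$ can be reordered freely. Hence $T(\pi) = \prod_k R_k^{c_k(\pi)}$, where $c_k(\pi)$ is the net number of signed $k$-steps. This depends only on the displacement $\mathbf{n}-\mathbf{m}$. The same reordering shows that iterated tile compositions produce $\sum_{\mathbf{n}} A_{\mathbf{n}}\mathbf{v}_{\mathbf{n}}$ independently of axis order. This is an induction on the number of axes, using the 1D unrolling \eqref{eq:unrolling}, associativity of \eqref{eq:composition}, and the fact that $\circ_k$ leaves other axes' generators untouched. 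Finally, $A_\mathbf{n}^{-1}A_\mathbf{m} = \prod_k R_k^{-n_k}\prod_k R_k^{m_k} = \prod_k R_k^{m_k-n_k}$ by commutativity. This equals the transport of the canonical-order path, and hence of every path. So $P_{\mathbf{m}\to\mathbf{n}}$ is determined by composition and depends only on displacement.

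The main obstacle is the bookkeeping in the composition half of sufficiency. One must check that the nested tile-wise compositions, whose operands are constrained to agree off-axis, genuinely assemble into the global sum with absolute operators in a fixed order. I would handle this by an induction on $D$, treating each $(D-1)$-dimensional slice result as a single datum before the final $\circ_D$ step. Everything else reduces to the generator-level identity $R_kR_l=R_lR_k$.
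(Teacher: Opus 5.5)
Your proposal is correct and follows essentially the same route as the paper. Necessity comes from the elementary square $T(kl)=R_lR_k$ versus $T(lk)=R_kR_l$. Sufficiency comes from freely reordering commuting factors, which the paper phrases as a chain of adjacent transpositions. The displacement formula $A_\mathbf{n}^{-1}A_\mathbf{m}=\prod_k R_k^{m_k-n_k}$ then follows by commutativity.

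Your tile-level check goes slightly beyond the paper's formal proof, which works only with positive words $T(\pi)$. That check uses a corner datum giving $R_yR_x\mathbf{v}$ versus $R_xR_y\mathbf{v}$, and your signed-step extension is also an addition. Conversely, the paper additionally proves that a displacement-only journey forces commutation, via $P_{\mathbf{e}_j\to\mathbf{e}_i+\mathbf{e}_j}=P_{\mathbf{0}\to\mathbf{e}_i}$; the statement as given does not require this.
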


\begin{remark}[Connection to interchange laws]
In higher category theory, the \emph{interchange law} $(x \circ_i y) \circ_j (z \circ_i w) = (x \circ_j z) \circ_i (y \circ_j w)$ states that two composition operations are compatible.
Path independence is the geometric manifestation of this algebraic condition: the interchange law holds for group-valued elements iff generators commute iff transport is path-independent.
\end{remark}

\subsection{Journey Operator and Score/Value Attention}\label{ssec:journey}

As derived in \Cref{ssec:composition}, composing axis-step generators along each axis yields absolute operators $A_i, A_j \in G \leq \mathrm{O}(d)$ at each position.
In the flat (commutative) regime established by \Cref{thm:interchange}, the journey operator arising from composition reduces to:
\begin{equation}\label{eq:journey}
P_{j \to i} = A_i^{-1} A_j.
\end{equation}
This operator structures both the score and value sides of attention:
\begin{align}
\text{Score:} \quad & \Att(i,j) = q_i^\top P_{j \to i}\, k_j = (A_i q_i)^\top (A_j k_j), \label{eq:score}\\
\text{Value:} \quad & \mathbf{c}_i = A_i^{-1} \textstyle\sum_j \alpha_{ij}\, A_j\, v_j = \sum_j \alpha_{ij}\, P_{j \to i}\, v_j. \label{eq:value-agg}
\end{align}
For orthogonal $A_i$: $(A_i q_i)^\top (A_j k_j) = q_i^\top A_i^{-1} A_j k_j = q_i^\top P_{j\to i} k_j$, confirming consistency.
Score computation rotates $q$ and $k$ by their respective absolute operators; value computation rotates each $v_j$ by $A_j$ then inverse-rotates the output by $A_i^{-1}$.

\textbf{RoPE as a special case.}
For a sequence, $A_t = R^t$ with $R = \diag(R(\theta_1), \ldots, R(\theta_{d/2}))$ being a block-diagonal rotation.
Then $P_{j \to i} = R^{-i} R^j = R^{j-i}$, and the score becomes $(R^i q_i)^\top (R^j k_j) = q_i^\top R^{j-i} k_j$---exactly RoPE~\citep{su2021roformer}.
Standard RoPE uses only the score side; our framework additionally rotates values.

\textbf{2D-RoPE as a special case.}
For a 2D grid with generators $R_x, R_y$ acting on disjoint subspaces (first $d/4$ planes for $x$, remaining for $y$):
$A_{(m,n)} = R_x^m R_y^n = \diag(R(\omega_1 m), \ldots, R(\omega_{d/4} m), R(\phi_1 n), \ldots, R(\phi_{d/4} n))$.
Since the generators act on disjoint planes, $R_x R_y = R_y R_x$---path independence is satisfied by construction.

\subsection{Toral Classification}\label{ssec:toral}

We classify all bilinear, norm-preserving, cocycle-compatible attention mechanisms under toral frame symmetry $T = \SO(2)^{d/2}$.
The result: the score matrix must be a block-diagonal $\SO(2)^{d/2}$ rotation---each of $d/2$ planes gets an independent angle $\theta_k(s) - \theta_k(s')$.
This recovers RoPE~\citep{su2021roformer}, 2D-RoPE~\citep{heo2024rope2d}, and tree PEs~\citep{shiv2019novel} as special cases.
If the full orthogonal group $\Od$ is the symmetry instead, positional structure collapses entirely.
Full axioms (bilinearity, toral frame invariance, cocycle, norm preservation), formal statements, and proofs are in \Cref{app:toral}.

\begin{theorem}[Toral Classification]\label{thm:toral}
Under bilinearity, toral frame invariance, cocycle compositionality, and norm preservation, the score matrix must be a block-diagonal rotation:
each of $d/2$ planes gets an independent angle $\theta_k(s) - \theta_k(s')$, and the full operator is their direct sum.
This is the unique form compatible with the symmetry contract.
Formal statement and proof in \Cref{app:toral-proof}.
\end{theorem}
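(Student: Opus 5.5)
I will first fix a concrete reading of the axioms, since the formal versions live in \Cref{app:toral}. The score is $\Att(s,s') = q^\top M(s,s')\, k$ for a matrix-valued kernel $M(s,s') \in \R^{d\times d}$; this is what bilinearity gives. \emph{Norm preservation} says $M(s,s') \in \Od$. \emph{Cocycle compositionality} says
\[
M(s,s'')=M(s,s')M(s',s''), \qquad M(s,s)=I.
\]
\emph{Toral frame invariance} says $g^\top M(s,s')\, g = M(s,s')$ for all $g \in T = \SO(2)^{d/2}$, where $T$ acts block-diagonally on the fixed decomposition $\R^d = \bigoplus_{k=1}^{d/2} \R^2$. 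The plan has three steps: use the cocycle to reduce to absolute operators, use toral invariance to force $T$-valued matrices, then read off the angles.

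\textbf{Step 1 (cocycle gives absolute operators).} Fix a base point $s_0$ and set $A_s := M(s_0,s)$. The cocycle gives $M(s_0,s') = M(s_0,s)M(s,s')$, hence
\[
M(s,s') = A_s^{-1}A_{s'},
\]
which matches the journey form \eqref{eq:journey}. Norm preservation makes each $A_s$ orthogonal. So it suffices to classify the individual matrices $M(s,s')$ under the symmetry.

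\textbf{Step 2 (toral invariance forces membership in $T$).} I will compute the commutant of $T$ in $\R^{d\times d}$. Write $M$ in $2\times 2$ blocks $M_{kl}$. Invariance means
\[
g_k^\top M_{kl}\, g_l = M_{kl} \quad \text{for all independent } g_k, g_l \in \SO(2).
\]
For $k\neq l$, take $g_l = I$ and $g_k = -I$; this forces $M_{kl} = 0$. For $k=l$, $M_{kk}$ must commute with all of $\SO(2)$, so $M_{kk} = aI + bJ$, where $J$ is rotation by $\pi/2$. This is the complex structure: $M_{kk}$ acts as multiplication by a complex number. Norm preservation then gives $a^2+b^2=1$. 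Hence $M_{kk} = R(\phi_k)$ and $M(s,s') \in T$.

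\textbf{Step 3 (read off the angles).} Apply Step 2 to $A_s = M(s_0,s)$ as well; each $A_s$ is block-diagonal with angles $\theta_k(s)$. Because $T$ is abelian,
\[
M(s,s') = A_s^{-1}A_{s'} = \bigoplus_k R\bigl(\theta_k(s')-\theta_k(s)\bigr).
\]
This is the claimed direct sum of independent per-plane rotations, with angles given by differences of absolute angles; the sign convention is $\theta_k(s)-\theta_k(s')$ up to orientation. For the uniqueness claim, I will check the converse: any choice of functions $\theta_k$ satisfies all four axioms. This also exhibits RoPE and 2D-RoPE as instances.

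The main obstacle is interpretive rather than computational. It lies in stating the invariance axiom precisely: conjugation versus two-sided action on $q$ and $k$ separately, and whether the symmetry acts on a fixed decomposition. If invariance is only required as $\Att(gq,gk)=\Att(q,k)$, that is exactly the conjugation condition above, so I will adopt it. If only a subgroup were required, the commutant could be larger, so I will insist on the full torus with independent $g_k$. The other point to watch is whether \emph{norm preservation} is meant as orthogonality of $M$ or only of $A_s$. Step 1 shows that the two readings are equivalent.
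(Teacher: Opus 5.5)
Your proposal is correct and follows essentially the same route as the paper's proof. Toral invariance places each score matrix in the commutant of $T$ (block-diagonal with blocks $aI_2+bJ$), norm preservation forces $a^2+b^2=1$, and cocycle factorization through a base point turns the per-plane angles into differences $\theta_k(s)-\theta_k(s')$. The differences are cosmetic: you compute the commutant by hand (using $g_k=-I_2$ to kill the off-diagonal blocks) rather than citing Schur's lemma, and your convention $A_s=M(s_0,s)$ is the inverse of the paper's gauge $R_s=B_{s,s_0}$, which only flips the sign of the angles.
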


\begin{theorem}[$\Od$ Impossibility]\label{thm:od-impossibility}
If the full orthogonal group $\Od$ is the frame symmetry (instead of the torus), every score matrix reduces to a scalar multiple of the identity---positional structure is completely lost.
Proof in \Cref{app:od}.
\end{theorem}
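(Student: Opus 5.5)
The plan is to run the argument behind \Cref{thm:toral} with the torus $T=\SO(2)^{d/2}$ replaced by the whole group $\Od$. We model a score matrix as a bilinear form $\Att(s,s')(q,k)=q^\top M(s,s')\,k$ between positions $s,s'$, with $M(s,s')\in\R^{d\times d}$. Frame invariance under a group $H\le\Od$ means that changing frames simultaneously for queries and keys leaves the score unchanged:
\[
(gq)^\top M(s,s')\,(gk)=q^\top M(s,s')\,k \qquad \text{for all } g\in H \text{ and all } q,k.
\]
Equivalently, $g^\top M(s,s')\,g=M(s,s')$, which for orthogonal $g$ says that $M(s,s')$ commutes with $g$. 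With $H=\Od$, the invariance axiom therefore places each $M(s,s')$ in the commutant of the defining representation of $\Od$ on $\R^d$.

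\textbf{Key step: compute the commutant of $\Od$.} I would use two explicit families of elements rather than appeal to Schur's lemma over $\R$ in general.

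First, $\Od$ contains every diagonal sign matrix $\diag(\pm1,\ldots,\pm1)$. Commuting with all of them forces $M$ to be diagonal: flipping the sign of coordinate $i$ alone negates the entries $M_{ij}$ and $M_{ji}$ for every $j\neq i$, so these entries vanish.

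Second, $\Od$ contains every transposition permutation matrix. Commuting with these forces all diagonal entries to be equal.

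Hence $M(s,s')=c(s,s')\,I$ for some scalar function $c$. Alternatively, since the defining representation of $\Od$ is absolutely irreducible, its real commutant is exactly $\R I$. This is the contrast with the torus case. There the commutant of each $\SO(2)$ block is $\{aI+bJ\}\cong\mathbb{C}$, which leaves room for rotation angles; for $\Od$ only scalars survive.

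\textbf{Applying the remaining axioms.}
\begin{enumerate}[nosep]
\item Norm preservation (orthogonality of the transport) gives $c(s,s')^2=1$, so $c(s,s')=\pm1$.
\item The cocycle condition $M(s,s'')=M(s,s')\,M(s',s'')$, together with $M(s,s)=I$, makes $c$ a multiplicative cocycle with values in $\{\pm1\}$.
\item If positions are connected by paths and $c$ depends continuously on its arguments, or if we work in the identity component as the torus theorem implicitly does, then $c\equiv1$.
\end{enumerate}
Either way, the score reduces to $q^\top k$ up to a global sign. All relative-position dependence disappears, which is the claimed collapse of positional structure.

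\textbf{Main obstacle.} I expect the difficulty to be in stating the invariance axiom precisely, not in the linear algebra. The proof needs the frame change to act by the same $g$ on both arguments, giving conjugation invariance $g^{-1}Mg=M$. If the axiom were instead the equivariance $M(gs,gs')=gM(s,s')g^{-1}$, with $\Od$ also acting on positions, the conclusion would not follow directly. In that case I would first restrict to the stabilizer of a base configuration and show that this stabilizer still contains enough sign flips and permutations to force scalars. I would state the axiom in the conjugation-invariant form matching \Cref{app:toral}, and note that the $\pm1$ sign ambiguity is killed by $c(s,s)=1$ plus connectivity.
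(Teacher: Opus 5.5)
Your core argument is correct, but it reaches the commutant by a different route from the paper. The paper argues that the defining representation of $\Od$ is irreducible, since $\Od$ is transitive on unit vectors, and then cites Schur's lemma to get $\Comm(\Od)=\R\cdot I_d$. You instead compute the commutant directly: diagonal sign flips kill the off-diagonal entries and transpositions equalize the diagonal.

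Your route is the more complete one. Over $\R$, Schur's lemma for an irreducible representation only yields a division algebra ($\R$, $\mathbb{C}$ or $\mathbb{H}$). Transitivity on unit vectors cannot exclude the complex case by itself: $\SO(2)$ is transitive on the unit circle in $\R^2$, yet its commutant is $\mathrm{span}_{\R}\{I_2,J\}$, exactly as in Step~1 of the toral proof. In your explicit check, the reflections $\diag(\pm1,\ldots,\pm1)$ are what exclude $J$-type intertwiners (the issue is genuine for $d=2$). So your computation supplies the absolute-irreducibility step that the paper leaves implicit. It also shows that the collapse already happens for any frame group containing the signed permutation matrices. The paper's Schur route is shorter and makes the contrast with the torus (commutant $\mathbb{C}$ per block versus $\R$) conceptually immediate, but it needs that extra line to be airtight.

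Your post-processing with norm preservation and the cocycle goes beyond the theorem, and its conclusion is wrong as stated. The paper's formal statement only claims $B_{s,s'}=\alpha_{s,s'}I_d$, which is explicitly a position-dependent rescaling. If you add those two axioms, \Cref{lem:cocycle} gives $c(s,s')=r_s r_{s'}$ with $r_s=c(s,s_0)\in\{\pm1\}$, and nothing forces this to be constant. On $\Z$, $r_s=(-1)^s$ gives the valid cocycle $c(s,s')=(-1)^{s-s'}$, which satisfies $c(s,s)=1$.

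Neither of your escape routes works. Continuity says nothing on discrete positions. Passing to the identity component does not help either, because $-I_d\in\SO(d)$ for even $d$. The correct conclusion is a score $r_s r_{s'}\,q^\top k$, a sign gauge that can be absorbed into per-position signs of $q$ and $k$, not $q^\top k$ up to a single global sign. Drop or weaken that sentence; the theorem does not need it.
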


\subsection{Flat/Curved Dichotomy and the V Rotation Prediction}\label{ssec:flat-curved}

\begin{definition}[Flat and curved]
A multi-axis compositional framework is \emph{flat} if its per-axis axis-step generators commute ($[R_i, R_j] = 0$ for all $i \neq j$), and \emph{curved} otherwise.
\end{definition}

All block-diagonal $\SO(2)^{d/2}$ methods are flat; dense rotation matrices (like LieRE) are generically curved.
In the flat regime, $P_{j \to i} v_j = R(\Delta p) v_j$ depends only on relative displacement---the journey operator is fully determined by composition.
\begin{definition}[Translation equivariance]\label{def:translation-equiv}
For a shift $\mathbf{u} \in \Z^D$, define the translation operator $(T_\mathbf{u} v)_\mathbf{m} = v_{\mathbf{m} - \mathbf{u}}$.
An aggregation rule $F$ mapping value fields $v$ to output fields $\mathbf{c}$ is \emph{translation equivariant} if $F(T_\mathbf{u} v)_{\mathbf{n} + \mathbf{u}} = F(v)_\mathbf{n}$ for all $\mathbf{n}, \mathbf{u}$: shifting inputs shifts outputs by the same amount.
\end{definition}

\begin{theorem}[Relative-displacement equivariance of value transport]\label{thm:v-rotation}
When per-axis generators commute (flat regime) and attention weights depend only on relative displacement, applying the journey operator $A_\mathbf{n}^{-1} A_\mathbf{m}$ to values before aggregation gives a translation-equivariant rule: shifting all positions by the same amount shifts outputs identically.
When generators do \emph{not} commute, the value transport becomes path-dependent---it depends on absolute positions, not just displacement.
Formal statement and proof in \Cref{app:v-rotation-proof}.
\end{theorem}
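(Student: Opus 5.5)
The plan has two parts: a direct calculation for the flat case, and an explicit counterexample for the curved case.

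\emph{Flat case.} Fix the aggregation rule
\[
F(v)_\mathbf{n} = \sum_{\mathbf{m}} \alpha(\mathbf{m}-\mathbf{n})\, A_\mathbf{n}^{-1} A_\mathbf{m}\, v_\mathbf{m},
\]
where the weights depend only on displacement, $\alpha_{\mathbf{n},\mathbf{m}} = \alpha(\mathbf{m}-\mathbf{n})$. The first step is to use commutativity of $R_1,\ldots,R_D$ to rewrite $A_\mathbf{n} = \prod_k R_k^{n_k}$ in any order. This gives $A_\mathbf{n}^{-1}A_\mathbf{m} = \prod_k R_k^{m_k-n_k} =: P(\mathbf{m}-\mathbf{n})$, a function of displacement alone. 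This is exactly the displacement form already noted after \Cref{thm:interchange}, and I will invoke that theorem rather than re-derive it.

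Next, substitute the shifted field. We have
\[
F(T_\mathbf{u} v)_{\mathbf{n}+\mathbf{u}} = \sum_\mathbf{m} \alpha(\mathbf{m}-\mathbf{n}-\mathbf{u})\, P(\mathbf{m}-\mathbf{n}-\mathbf{u})\, v_{\mathbf{m}-\mathbf{u}}.
\]
Reindexing with $\mathbf{m}' = \mathbf{m}-\mathbf{u}$, a bijection of $\Z^D$, yields $F(v)_\mathbf{n}$. I will assume finite support of $v$, or absolute summability, so that the reindexing is legitimate. If the grid is finite, I will state the result on $\Z^D$ with zero padding, or for shifts that keep the support inside the grid.

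\emph{Curved case.} Here the claim is that transport depends on absolute position and not only on displacement. I will exhibit this with $D=2$ and generators satisfying $R_1R_2 \neq R_2R_1$. Take displacement $\mathbf{e}_1$, from $\mathbf{n}$ to $\mathbf{n}+\mathbf{e}_1$. Then
\[
A_{\mathbf{n}}^{-1}A_{\mathbf{n}+\mathbf{e}_1} = R_2^{-n_2} R_1^{-n_1} R_1^{n_1+1} R_2^{n_2} = R_2^{-n_2} R_1 R_2^{n_2}.
\]
At $n_2=0$ this equals $R_1$. At $n_2=1$ it equals $R_2^{-1}R_1R_2$, which differs from $R_1$ precisely because the generators do not commute. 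So the same displacement yields different operators at different absolute positions.

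To turn this into failure of equivariance, choose $\alpha$ as a delta at displacement $\mathbf{e}_1$ and $v$ supported on one site $\mathbf{e}_1$ with some vector $w$ satisfying $R_1 w \neq R_2^{-1}R_1R_2 w$; such a $w$ exists because the two matrices differ. Comparing $F(v)_{\mathbf{0}}$ with $F(T_{\mathbf{e}_2}v)_{\mathbf{e}_2}$ then gives $R_1w$ versus $R_2^{-1}R_1R_2w$, which are unequal. For path dependence, I will cite \Cref{thm:interchange}: the two-step paths $12$ and $21$ give transports $R_2R_1 \neq R_1R_2$.

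The main obstacle is interpretive rather than technical. The informal phrase ``value transport becomes path-dependent'' has to be matched to the formal claim. I will formalize it as two precise statements: non-equivariance of $F$, and the existence of two paths with equal endpoints but different transports. Each is then proved by the explicit counterexample above.
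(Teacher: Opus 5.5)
Your proposal is correct and matches the paper's proof: the flat case is the same reindexing $\mathbf{m}'=\mathbf{m}-\mathbf{u}$ after writing $A_\mathbf{n}^{-1}A_\mathbf{m}$ as a function of $\mathbf{m}-\mathbf{n}$, and your computation $R_2^{-n_2}R_1R_2^{n_2}$ at $n_2=0,1$ is exactly the paper's choice $\mathbf{n}=\mathbf{e}_b$, $\mathbf{m}=\mathbf{e}_a+\mathbf{e}_b$, $\mathbf{u}=-\mathbf{e}_b$ (restricting to $D=2$ loses nothing, since setting all other coordinates to zero reduces the general case to it). Your extra delta-weight, single-site example goes slightly beyond the paper's part (b), which only shows that the transport operator changes under a shift: it shows the aggregation rule itself fails translation equivariance.
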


\begin{remark}[Scope: operator-level, not whole-layer equivariance]\label{rem:operator-scope}
In a standard transformer, attention weights $\alpha_{ij} = \mathrm{softmax}_j(q_i^\top P_{j \to i} k_j / \sqrt{d})$ depend on token content through $q_i = W_Q x_i$ and $k_j = W_K x_j$, so they are not purely displacement-dependent.
\Cref{thm:v-rotation} therefore does not imply that the full attention layer is translation equivariant.
What the theorem does justify is the \emph{value-side operator choice}: once absolute operators are determined by composition (\Cref{ssec:composition}), $A_i^{-1} A_j$ is the canonical cocycle-compatible journey that yields displacement-dependent value transport when weights happen to be displacement-only.
In the general content-dependent case, no full equivariance guarantee remains, but the value transform is still norm-preserving and uses the same toral operator family.
\end{remark}

We test the flat-regime hypothesis experimentally in \Cref{ssec:cifar}.

\subsection{Value-Path Hierarchy and Relation to SSMs}\label{ssec:value-path}

The value-path formula~\eqref{eq:unified}, introduced in \Cref{ssec:composition} as a direct consequence of the composition operation, provides common notation for standard attention, SSMs, and JoFormer.
This is a notation-level bridge rather than an equivalence: Mamba's transition matrices are not generally toral, attention weights are normalized, and the forward SSM kernel differs from the attention journey by convention.
The hierarchy, SSM bridge proposition, and detailed analysis are in \Cref{app:value-path}.

\section{JoFormer Architecture}\label{sec:joformer}

The algebraic framework motivates a concrete architecture: the \emph{JoFormer} (Journey-based Transformer), which implements data-dependent journey operators on both the score and value sides.

\subsection{Design Principles}\label{ssec:design}

Three inductive biases follow from the theory:
\begin{enumerate}[nosep]
\item \textbf{Commutativity}: use block-diagonal $\SO(2)^{d/2}$ rotations (flat, satisfying path independence).
Under the full modeling contract (bilinearity, toral-frame invariance, cocycle, norm preservation), this is the only compatible structure (\Cref{thm:toral}). The choice of $T = \SO(2)^{d/2}$ as the symmetry group is an assumption of that contract, not a consequence.
\item \textbf{V rotation}: apply $P_{j \to i}$ to values (justified by the flat/curved analysis---the journey operator arising from composition gives displacement-dependent value transport when axis-step generators commute).
\item \textbf{Data dependence}: compute angles $\theta(x)$ from content, enriching the value-path beyond fixed positional functions---an SSM-like data-dependent value transform, though without the recurrent product over intermediate states.
\end{enumerate}

\paragraph{What the theory does and does not justify.}
The core claim is that standard attention lacks a structural inductive bias on the value side: commutative summation discards compositional context.
\Cref{thm:interchange,thm:v-rotation} prove that the journey operator---arising directly from composition---fills this gap, giving displacement-dependent value transport exactly in the flat/commutative regime with relative attention weights.
JoFormer-fixed and JoFormer-learned preserve this contract.
JoFormer-projected keeps the same norm-preserving toral operator family but makes angles content-dependent; it is therefore a theory-\emph{motivated} architecture, not a direct translation-equivariance corollary.

A detailed breakdown of which theorem supports which variant is in \Cref{tab:variant-scope} (\Cref{app:variant-scope}).

\subsection{Architecture Variants}\label{ssec:variants}

All variants share the same attention computation:
\begin{align}
\text{Score:} \quad & \alpha_{ij} \propto \exp\big((R(\theta_i) q_i)^\top (R(\theta_j) k_j) / \sqrt{d}\big), \label{eq:joformer-score} \\
\text{Value:} \quad & c_i = R(\theta_i)^{-1} \textstyle\sum_j \alpha_{ij}\, R(\theta_j)\, v_j. \label{eq:joformer-value}
\end{align}
The variants differ in how angles $\theta_i$ are computed:

\paragraph{JoFormer-fixed.}
Angles are linear in position: $\theta_k^{(l)}(t) = \omega_k \cdot t$ (same as RoPE frequencies).
This is equivalent to RoPE on Q/K \emph{plus} V rotation and inverse rotation on output.
Implements the journey value path $P_{j \to i} = R^{-i}R^j = R^{j-i}$.

\paragraph{JoFormer-learned.}
Per-layer learned frequency vectors $\omega^{(l)} \in \R^{d/2}$ define $\theta^{(l)}(t) = t \cdot \omega^{(l)}$.
This is equivalent to RoPE with per-layer learned frequencies instead of fixed geometric spacing.
Enables layer-specific frequency selection while maintaining the linear-in-position structure.

\paragraph{JoFormer-projected.}
Per-layer MLP angle projectors compute angles from the residual stream:
\begin{equation}\label{eq:projected}
\theta^{(l)}(x) = W_2^{(l)} \,\mathrm{GELU}(W_1^{(l)} \,\mathrm{LN}(x)),
\end{equation}
with $W_1^{(l)} \in \R^{d \times d}$, $W_2^{(l)} \in \R^{d/2 \times d}$.
Angles are \emph{content-dependent}: computed fresh at each layer from the current representation.
This provides an SSM-like data-dependent value transform: $P_{j \to i}$ depends on the current residual representations at positions $i$ and $j$, but does not implement the recurrent product over intermediate states.
The MLP adds $\sim 1.5d^2$ parameters per layer (small relative to the $12d^2$ for attention + FFN).
In practice, JoFormer-projected requires softmax attention (softplus is unstable with data-dependent angles, as unbounded weights accumulate across layers) and a lower learning rate ($\leq 2{\times}10^{-4}$).

\paragraph{Attention ordering for vision.}
JoFormer uses $K^\top Q$ (not $Q^\top K$) attention ordering in vision applications.
Since $q_i^\top P_{j\to i} k_j = (A_i q_i)^\top (A_j k_j) = \hat{q}_i^\top \hat{k}_j$, both orderings yield equivalent scores; we adopt $K^\top Q$ as a convention matching the reference-factored structure $\hat{k}_j^\top \hat{q}_i$.

\section{Experiments}\label{sec:experiments}

Each experiment is a single-seed sanity check for a different theoretical hypothesis; none are benchmark claims.
MNIST (\Cref{app:mnist-results}): monoidal compression.
CIFAR-100/ImageNet: V rotation.
Wikipedia LM: value-path hierarchy.
Length generalization: projected angles.

\subsection{CIFAR-100 V Rotation and Scaling}\label{ssec:cifar}

\paragraph{Setup.}
Two experimental setups test V rotation for vision. \textbf{(A)} ViT-Tiny ($D{=}384$, 12 layers, 6 heads, ${\sim}14.9$M params, patch $4{\times}4$, CIFAR-100, 200 epochs, Adam lr$=10^{-4}$, cosine annealing, H100, seed=42) using LieRE's framework~\citep{liere2024}.
\textbf{(B)} Smaller ViT (4 layers, 4 heads, patch $4{\times}4$, CIFAR-100, 300 epochs, cosine lr$=10^{-3}$, dropout=0.1, weight decay, mixup, cutout) at $D \in \{32, 64, 128, 256\}$ with fully deterministic GPU-resident training.
Both setups use exact reproducibility (fixed seeds, no DataLoader workers, torch-op augmentation on GPU).

\paragraph{Methods.}
Nine PE variants organized on two axes: \emph{frequency type} (fixed RoPE vs.\ learned) and \emph{V rotation} (Q/K only vs.\ Q/K/V with inverse rotation on output).
The axial factorization splits dimensions into disjoint $y$- and $x$-subspaces, guaranteeing commutativity.

\paragraph{Results: V rotation is consistently beneficial in these paired single-seed CIFAR protocols.}

\begin{table}[h]
\centering
\caption{V rotation effect on CIFAR-100 (paired comparisons, Setup B, D=256, deterministic single run). V rotation helps in all five paired comparisons; the synergy with learnable frequencies is largest.}
\label{tab:v-rotation-cifar}
\begin{tabular}{@{}lccr@{}}
\toprule
\textbf{Approach} & \textbf{Q/K only} & \textbf{Q/K/V} & \textbf{$\Delta$} \\
\midrule
Axial learned & monoidal\_axial: 60.92\% & \textbf{joformer\_axial: 63.27\%} & \textbf{+2.35\%} \\
Axial learned (per-layer) & 60.63\% & 62.76\% & +2.13\% \\
Axial fixed & rope2d: 61.39\% & joformer\_old: 61.85\% & +0.46\% \\
Combined fixed & rope2dv2: 55.53\% & joformer\_fixed: 56.66\% & +1.13\% \\
Combined learned (per-layer) & 60.85\% & 61.17\% & +0.32\% \\
\bottomrule
\end{tabular}
\end{table}

V rotation helps consistently across D=32--256 (\Cref{tab:cifar-scaling} in \Cref{app:cifar-scaling}), with V rotation and learnable frequencies synergistic ($+1.88\%$ at D=256).
In the ViT-Tiny LieRE framework, axial-dense V rotation gains $+0.51\%$, while LieRE64's dense (curved) rotation drops $1.23\%$ (\Cref{app:experiments})---consistent with the flat-regime hypothesis.

\subsection{ImageNet ViT-S Scale Validation}\label{ssec:imagenet}

ViT-S ($D{=}384$, 12 layers, 6 heads, ${\sim}22$M params, DeiT-III recipe~\citep{touvron2022deit3}, 300 epochs, single seed; details in \Cref{app:experiments}).

\begin{table}[h]
\centering
\caption{ImageNet-1K ViT-S (DeiT-III, 300 epochs, single seed). JoFormer adds V rotation + inverse rotation to RoPE2D.}
\label{tab:imagenet}
\begin{tabular}{lcc}
\toprule
\textbf{Method} & \textbf{Top-1 Acc} & \textbf{Top-5 Acc} \\
\midrule
RoPE2D (axial, Q/K only) & 80.71\% & 95.26\% \\
\textbf{JoFormer} (axial, Q/K/V) & \textbf{81.11\%} & \textbf{95.53\%} \\
\midrule
$\Delta$ & $+0.40\%$ & $+0.27\%$ \\
\bottomrule
\end{tabular}
\end{table}

The $+0.40\%$ gap is consistent from epoch 180 onward. Single-seed: a scale sanity check, not a conclusive ImageNet improvement.

\subsection{JoFormer Wikipedia Language Modeling}\label{ssec:wiki-lm}

Full English Wikipedia (${\sim}983$M byte-pair encoding (BPE) tokens, vocab=8K, block\_size=512, 200K iters; details in \Cref{app:experiments}).

\begin{table}[h]
\centering
\caption{Wikipedia LM validation perplexity (PPL; 200K iters, vocab=8K, full wiki). JoFormer-projected leads in all configurations tested (single seed). Lower = better.}
\label{tab:wiki-lm}
\begin{tabular}{lcccc}
\toprule
\textbf{Config} & \textbf{RoFormer} & \textbf{JoF-fixed} & \textbf{JoF-learned} & \textbf{JoF-projected} \\
\midrule
n100, L2 & 7.24 & 6.67 & 6.43 & \textbf{6.15} \\
n200, L2 & 5.82 & 5.51 & 5.42 & \textbf{5.17} \\
n200, L4 & 5.36 & 5.07 & 5.01 & \textbf{4.72} \\
n250, L4 & 5.10 & 4.85 & 4.74 & \textbf{4.55} \\
n500, L2 & 5.02 & 4.82 & 4.73 & \textbf{4.58} \\
n500, L4 & 4.67 & 4.42 & 4.37 & \textbf{4.32} \\
\bottomrule
\end{tabular}
\end{table}

JoFormer-projected leads in all configurations, consistent with the value-path hierarchy (which does not prove an optimization ordering).

\subsection{Length Generalization}\label{ssec:length-gen}

163M params (D=768, 16 layers, 8 heads), OpenWebText, trained on block\_size=512.
Architecture: 5 windowed layers (window=32) + 1 full-attention NoPE layer.
JoFormer-projected uses 200K total iterations (150K fixed-angle $+$ 50K projected-angle fine-tuning); the RoPE baseline trains for 150K iterations.
The additional 50K iterations are needed to convert fixed angles to content-dependent projected angles (details in \Cref{app:experiments}).

\begin{table}[h]
\centering
\caption{Length generalization from 512 training length (163M params, OWT, single seed).
Ratio = PPL@4096 / PPL@512; lower ratio = better extrapolation. JoFormer-projected
uses 200K total iterations (150K fixed $+$ 50K projected fine-tuning); RoPE trains for 150K.}
\label{tab:extrap}
\begin{tabular}{lccccc}
\toprule
\textbf{Model} & \textbf{PPL@512} & \textbf{PPL@1024} & \textbf{PPL@2048} & \textbf{PPL@4096} & \textbf{Ratio} \\
\midrule
\textbf{JoFormer-projected} & \textbf{25.55} & \textbf{25.17} & \textbf{24.73} & \textbf{26.01} & \textbf{1.02$\times$} \\
JoFormer fixed & 26.32 & 34.32 & 64.74 & 115.96 & 4.41$\times$ \\
RoPE & 26.82 & 41.90 & 90.10 & 168.67 & 6.29$\times$ \\
\bottomrule
\end{tabular}
\end{table}

JoFormer-projected extrapolates well ($1.02\times$ ratio vs.\ $6.29\times$ for RoPE); JoFormer-fixed is intermediate ($4.41\times$), suggesting data-dependent angles---not just V rotation---are consistent with improved length generalization.
The staged recipe worked more reliably than training projected angles from scratch in our setup.

\section{Related Work}\label{sec:related}

RoPE~\citep{su2021roformer}, 2D-RoPE~\citep{heo2024rope2d}, and ComRoPE~\citep{comrope2025} operate on Q/K only; our compositional framework extends to the value side via the journey operator, with commutativity as the iff condition for displacement-dependent value transport (\Cref{thm:v-rotation}).
Shaw et al.~\citep{shaw2018selfattention} inject relative information into values via learned additive embeddings; the proposed value operation instead uses the same orthogonal cocycle transport as scores, inheriting norm preservation and the target-frame interpretation.
LieRE~\citep{liere2024} uses dense (curved) rotations---expressive, but composition is path-dependent.
S4~\citep{gu2022s4} and Mamba~\citep{gu2023mamba} implement non-trivial value-path kernels; our hierarchy (\Cref{app:value-path}) provides common notation.
ALiBi~\citep{press2022train} uses additive biases; toral scores span a $d$-dimensional trigonometric polynomial space (\Cref{app:alibi})---a function-class separation, not a dominance claim.

\section{Discussion and Conclusion}\label{sec:conclusion}

We started from one operation: composing data points that carry both content and axis-step generators.
This operation defines journey operators along paths, and we showed that journey operators can be expressed in terms of absolute operators as $A_k^{-1} A_j$.
From this foundation, a chain of results follows.
Path-independent composition requires commuting axis-step generators (\Cref{thm:interchange}).
Under natural axioms, the only compatible mechanism is block-diagonal $\SO(2)^{d/2}$ rotation (\Cref{thm:toral}), explaining why RoPE-like methods arise naturally.
In the flat regime, value transport via the journey operator is displacement-dependent (\Cref{thm:v-rotation}), giving a concrete design rule: apply the journey to values, not just scores.

Standard attention discards this compositional structure entirely.
JoFormer brings it back by applying the journey operator to values before aggregation.
The value-path formula~\eqref{eq:unified} places standard attention, SSMs, and JoFormer on a common spectrum, clarifying what each composes and what it discards (\Cref{app:value-path}).

For practitioners, V rotation is a zero-parameter change to RoPE: apply the same rotation to values and inverse-rotate the output.
Our experiments across vision (${\sim}2\%$ on CIFAR-100, $+0.40\%$ on ImageNet), language modeling, and length generalization suggest the framework has practical value.
All comparisons are single-seed sanity checks, not benchmark claims.

Open directions include extending the toral classification to non-abelian frame groups, making the value-path hierarchy quantitative, and finding simpler length-generalization recipes than the staged JoFormer-projected training.

\paragraph{Limitations.}
The toral classification assumes multiplicity-free representations; extending it to non-abelian frame groups remains open.
The theory motivates projected angles and value-side rotation but does not prove that they help---they are theory-\emph{motivated}, not theory-\emph{proved} (\Cref{tab:variant-scope}).
Full-layer equivariance requires additional assumptions beyond operator-level analysis (\Cref{rem:operator-scope}).
All experimental comparisons are single-seed sanity checks at small-to-moderate scale; we do not claim state-of-the-art results, and the approach has not been validated on large-scale pretraining.

\paragraph{Broader Impact.}
This work is primarily theoretical---it provides an algebraic framework for understanding and designing positional mechanisms in transformers.
We do not foresee specific negative societal consequences beyond those common to general-purpose sequence and vision architectures.

\bibliographystyle{plainnat}
\bibliography{references_nonanonymous}

\newpage
\appendix

\section{Full Proofs}\label{app:proofs}

\subsection{Proof of Path Independence (\Cref{thm:interchange})}\label{app:interchange-proof}

\paragraph{Formal statement.}
Let $R_1, \ldots, R_D \in G$ be axis-step generators.
The following are equivalent:
\begin{enumerate}[nosep]
\item[(i)] $T(\pi)$ depends only on the count vector $(|\pi|_1, \ldots, |\pi|_D)$ of $\pi$ (path independence).
\item[(ii)] Every elementary square commutes: $T(ij) = T(ji)$ for all $i \neq j$.
\item[(iii)] Per-axis generators commute: $R_i R_j = R_j R_i$ for all $i \neq j$.
\end{enumerate}
When these hold, the relative operator
\[
P_{\mathbf{m} \to \mathbf{n}} = A_\mathbf{n}^{-1} A_\mathbf{m} = R_1^{m_1 - n_1} \cdots R_D^{m_D - n_D}
\]
depends only on the displacement $\mathbf{m} - \mathbf{n}$ (source minus target)---the journey is fully determined by the composition structure.

\begin{proof}
\textbf{(iii)$\Rightarrow$(i):}
Any two words $\pi, \pi'$ with the same count vector are related by a sequence of adjacent transpositions (this is the well-known fact that adjacent transpositions generate the symmetric group, applied to each pair of adjacent letters in the word).
Consider an adjacent transposition swapping letters at positions $p, p+1$: $\pi = \ldots k_p\, k_{p+1} \ldots$ becomes $\pi' = \ldots k_{p+1}\, k_p \ldots$.
Since $T(\pi) = R_{k_m} \cdots R_{k_{p+1}} R_{k_p} \cdots R_{k_1}$ and the transposition only affects the product $R_{k_{p+1}} R_{k_p}$, we need $R_{k_{p+1}} R_{k_p} = R_{k_p} R_{k_{p+1}}$.
If $k_p = k_{p+1}$ this is trivial; otherwise it follows from (iii).
Composing all adjacent transpositions: $T(\pi) = T(\pi')$.

\textbf{(i)$\Rightarrow$(ii):}
Words $ij$ and $ji$ have the same count vector $(1_i, 1_j)$ (one step along each axis).
By (i): $T(ij) = T(ji)$.

\textbf{(ii)$\Rightarrow$(iii):}
By definition, $T(ij) = R_j R_i$ (rightmost letter acts first) and $T(ji) = R_i R_j$.
Equality $T(ij) = T(ji)$ gives $R_j R_i = R_i R_j$, i.e., $R_i R_j = R_j R_i$.

\textbf{Path independence of relative operators:}
When (iii) holds, $A_\mathbf{n} = R_1^{n_1} \cdots R_D^{n_D}$ is well-defined (independent of multiplication order by commutativity).
Then $P_{\mathbf{m} \to \mathbf{n}} = A_\mathbf{n}^{-1} A_\mathbf{m} = R_D^{-n_D} \cdots R_1^{-n_1} R_1^{m_1} \cdots R_D^{m_D} = R_1^{m_1-n_1} \cdots R_D^{m_D-n_D}$, depending only on $\mathbf{m} - \mathbf{n}$.
Conversely, if $P_{\mathbf{m} \to \mathbf{n}}$ depends only on $\mathbf{m} - \mathbf{n}$, then in particular $P_{\mathbf{e}_j \to \mathbf{e}_i + \mathbf{e}_j} = P_{\mathbf{0} \to \mathbf{e}_i}$.
The LHS is $(R_i R_j)^{-1} R_j = R_j^{-1} R_i^{-1} R_j$; the RHS is $R_i^{-1}$.
Equality gives $R_j^{-1} R_i^{-1} R_j = R_i^{-1}$, hence $R_i R_j = R_j R_i$.
\end{proof}

\subsection{Toral Classification: Full Development}\label{app:toral}

This subsection provides the full axioms, scope discussion, and supporting material for the toral classification summarized in \Cref{ssec:toral}.

In this subsection, $B_{s,s'}$ denotes the pairwise score matrix (the journey $P_{s'\to s}$); it is not the absolute operator $A_s$ used elsewhere.

\paragraph{Scope of the classification.}
The theorem does \emph{not} derive the torus from first principles.
It characterizes all bilinear, norm-preserving, cocycle-compatible attention mechanisms \emph{after} selecting the maximal connected abelian (commutative) compact frame group $T = \SO(2)^{d/2}$ acting with one non-repeated 2D plane per factor.
The axiom choice has a design rationale: bilinearity restricts to multiplicative PEs; toral frame invariance selects $T$ (full $\Od$ collapses positional structure, \Cref{thm:od-impossibility}); cocycle enforces compositionality; norm preservation prevents rescaling.
The result is conditional: it determines what is compatible with this contract, not what is optimal.
The novelty is not Schur's lemma itself, but the identification of this symmetry contract as exactly the contract under which journey-based score/value transport reduces to RoPE-style block rotations.

The four axioms below formalize a concrete question: if the only unobservable frame changes are independent rotations of each 2D coordinate plane (i.e., the frame group is the torus $T = \SO(2)^{d/2}$), what is the most general attention mechanism consistent with that symmetry?

\begin{axiom}[Bilinearity]\label{ax:bilinear}
For each position pair $(s,s')$, there exists $B_{s,s'} \in \R^{d \times d}$ such that $\Att(q,k,s,s') = q^\top B_{s,s'} k$.
\end{axiom}

\begin{axiom}[Toral Frame Invariance]\label{ax:frame}
For all $M \in T = \SO(2)^{d/2}$: $\Att(Mq, Mk, s, s') = \Att(q, k, s, s')$.
This states that rotating Q and K by the same toral element does not change scores.
\end{axiom}

\begin{axiom}[Cocycle]\label{ax:cocycle}
$B_{s,s} = I_d$ (self-attention is unmodified) and $B_{s,s'} B_{s',s''} = B_{s,s''}$ (composition).
Equivalently: non-degeneracy (each $B_{s,s'}$ invertible) plus transitivity.
\end{axiom}

\begin{axiom}[Norm preservation]\label{ax:norm}
Each $B_{s,s'}$ preserves the Euclidean norm: $\|B_{s,s'} x\| = \|x\|$ for all $x$.
This prevents positional encoding from rescaling attention scores.
\end{axiom}

\begin{lemma}[Cocycle factorization]\label{lem:cocycle}
Under Axiom~\ref{ax:cocycle}, fix a reference position $s_0$ and define $R_s = B_{s,s_0}$.
Then $B_{s,s'} = R_s R_{s'}^{-1}$.
\end{lemma}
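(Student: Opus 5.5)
The plan is to prove the identity directly from the two clauses of Axiom~\ref{ax:cocycle}. Everything reduces to showing that each $R_s$ is invertible with inverse $B_{s_0,s}$, and then applying transitivity once more.

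\textbf{Step 1 (inverses).} Specialize the composition rule $B_{s,s'}B_{s',s''}=B_{s,s''}$ to $s''=s$. Together with $B_{s,s}=I_d$ this gives $B_{s,s'}B_{s',s}=I_d$ for all $s,s'$. Swapping the roles of $s$ and $s'$ gives $B_{s',s}B_{s,s'}=I_d$ as well. Since these are square matrices, either identity alone already forces invertibility, with $B_{s,s'}^{-1}=B_{s',s}$. Taking $s'=s_0$ yields $R_s^{-1}=B_{s,s_0}^{-1}=B_{s_0,s}$. This also justifies the ``equivalently'' phrasing in the axiom: non-degeneracy is a consequence of the cocycle law plus the normalization.

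\textbf{Step 2 (factorization).} Apply transitivity with the intermediate index $s_0$:
\[
B_{s,s'} = B_{s,s_0}\,B_{s_0,s'} = R_s\,R_{s'}^{-1},
\]
where the last equality uses Step~1 with $s'$ in place of $s$. Nothing here depends on the choice of $s_0$. A different reference point $s_1$ changes each $R_s$ to $R_sB_{s_0,s_1}$, and this factor cancels in $R_sR_{s'}^{-1}$. The lemma needs no proof of independence from $s_0$, but I would note this as a remark.

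The only point needing care is Step~1. The composition law must be used with a repeated index ($s''=s$) in order to extract inverses. Invertibility must not be assumed separately, otherwise the argument would be circular with respect to the ``equivalently'' clause. Everything else is a one-line substitution. Neither toral invariance nor norm preservation is used here; they enter only later in the proof of \Cref{thm:toral}. There, $R_s$ will be constrained to commute with $T$ and to lie in $\Od$, hence to lie in $\SO(2)^{d/2}$ blocks.
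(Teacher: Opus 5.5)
Your proof is correct and is essentially the paper's argument. The paper writes $B_{s,s_0} = B_{s,s'}B_{s',s_0}$ and right-multiplies by $B_{s',s_0}^{-1}$, while you insert $s_0$ as the intermediate index and identify $B_{s_0,s'} = R_{s'}^{-1}$; both are one use of transitivity plus invertibility. Your explicit derivation of $B_{s,s'}^{-1} = B_{s',s}$ from $B_{s,s} = I_d$ is a step the paper takes from the ``equivalently'' clause of the axiom and proves separately in its cocycle-from-primitives subsection.
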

\begin{proof}
By the cocycle property: $B_{s,s_0} = B_{s,s'} B_{s',s_0}$, hence $B_{s,s'} = B_{s,s_0} B_{s',s_0}^{-1} = R_s R_{s'}^{-1}$.
\end{proof}

The derivation applies standard commutant and Schur machinery to multiplicity-free representations; the contribution is identifying toral frame invariance as the right contract, from which RoPE, 2D-RoPE, and tree PEs emerge as special cases.

\paragraph{Orientation convention.}
Here $B_{s,s'}=P_{s'\to s}=A_s^{-1}A_{s'}$ under the notation contract.
The factorization $B_{s,s'}=R_sR_{s'}^{-1}$ uses gauge variables $R_s=B_{s,s_0}$.
With reference $s_0=0$ and $A_t=R^t$: $R_s = B_{s,0} = A_s^{-1}A_0 = R^{-s}$, so the gauge angle is $\theta_k(s) = -\omega_k s$ (negated relative to the absolute angle).
The formula $\theta_k(s)-\theta_k(s') = \omega_k(s'-s)$ then matches $P_{s'\to s} = R^{s'-s}$.

If repeated isotypic components or non-abelian frame groups are allowed, the commutant---and therefore the classification---changes.

\Cref{thm:toral} recovers RoPE~\citep{su2021roformer} (gauge angle $\theta_k(t) = -\omega_k t$, giving score angle $\omega_k(s'{-}s)$), 2D-RoPE~\citep{heo2024rope2d} (axis-partitioned planes), simple depth-indexed tree encodings~\citep{shiv2019novel}, and dot-product attention ($\theta_k \equiv 0$) as special cases.

\subsection{Proof of the Toral Classification (\Cref{thm:toral})}\label{app:toral-proof}

\paragraph{Formal statement.}
Assume $d$ is even. Let $T = \SO(2)^{d/2}$ act on $\R^d$ as the direct sum of the standard 2D representation of each factor, one factor per coordinate plane (each $\SO(2)$ factor acts on exactly one 2-plane; there are no repeated isotypic components).
Under Axioms~\ref{ax:bilinear}--\ref{ax:norm} with this frame group, the pairwise score matrix is:
\begin{equation}\label{eq:toral}
B_{s,s'} = \diag\!\Big(R\big(\theta_1(s) - \theta_1(s')\big), \ldots, R\big(\theta_{d/2}(s) - \theta_{d/2}(s')\big)\Big),
\end{equation}
where $R(\phi) = \bigl(\begin{smallmatrix} \cos\phi & -\sin\phi \\ \sin\phi & \cos\phi \end{smallmatrix}\bigr)$ and $\theta_k: S \to \R$ assigns an angle per rotation plane per position.

\begin{proof}
\textbf{Step 1} (Commutant structure):
The commutant of $T = \SO(2)^{d/2}$ in $\R^{d \times d}$ consists of block-diagonal matrices with $2 \times 2$ blocks.
Each block carries the standard 2D representation of the corresponding $\SO(2)$ factor, which is irreducible over $\R$.
By Schur's lemma, the real commutant of an irreducible real representation is $\R$, $\mathbb{C}$, or $\mathbb{H}$.
For the 2D representation of $\SO(2)$, the commutant is $\mathrm{span}_\R\{I_2, J\} \cong \mathbb{C}$ where $J = \bigl(\begin{smallmatrix} 0 & -1 \\ 1 & 0 \end{smallmatrix}\bigr)$.
Off-diagonal blocks between different $\SO(2)$ factors vanish because distinct irreducible representations have zero intertwining operators.

\textbf{Step 2} (Frame invariance $\Rightarrow$ commutant):
Axiom~\ref{ax:frame}: $M^\top B_{s,s'} M = B_{s,s'}$ for all $M \in T$.
Since each $M \in T$ is orthogonal ($M^\top = M^{-1}$), this becomes $M^{-1} B_{s,s'} M = B_{s,s'}$, i.e., $B_{s,s'} M = M B_{s,s'}$ for all $M \in T$.
Thus $B_{s,s'}$ lies in the commutant $\mathrm{Comm}(T) = \{X : XM = MX \;\forall\, M \in T\}$.
Each $2 \times 2$ block has the form $a_k I_2 + b_k J = \rho_k R(\psi_k)$ with $\rho_k = \sqrt{a_k^2 + b_k^2}$.

\textbf{Step 3} (Norm preservation):
Axiom~\ref{ax:norm}: $\|B_{s,s'} x\| = \|x\|$.
Applied to each $2 \times 2$ block: $\rho_k = 1$, giving pure rotation $R(\psi_k)$.

\textbf{Step 4} (Cocycle factorization):
Axiom~\ref{ax:cocycle}: $B_{s,s'} = R_s R_{s'}^{-1}$ for some $R_s \in T$.
Each block: $R(\psi_k(s,s')) = R(\theta_k(s)) R(-\theta_k(s')) = R(\theta_k(s) - \theta_k(s'))$.
\end{proof}

\subsection{$O(d)$ Impossibility}\label{app:od}

\paragraph{Formal statement.}
Under full orthogonal symmetry $H = \Od$ with $d \geq 2$, frame invariance forces $B_{s,s'} = \alpha_{s,s'} I_d$.
Attention collapses to position-dependent rescaling (by Schur's lemma on the irreducible standard representation).

\begin{proof}[Proof of \Cref{thm:od-impossibility}]
The standard representation of $\Od$ on $\R^d$ is irreducible for $d \geq 2$ (any invariant subspace is either $\{0\}$ or $\R^d$, since $\Od$ acts transitively on unit vectors).
By Schur's lemma, its real commutant is $\R \cdot I_d$.
Frame invariance forces $B_{s,s'} \in \Comm(\Od) = \R \cdot I_d$.
\end{proof}

\subsection{Proof of V-Rotation Equivariance (\Cref{thm:v-rotation})}\label{app:v-rotation-proof}

\paragraph{Formal statement.}
Let positions lie in $\Z^D$ with orthogonal axis-step generators
$R_1,\ldots,R_D$ and absolute operators (from composition)
$A_\mathbf{n}=R_1^{n_1}\cdots R_D^{n_D}$.
Assume attention weights are functions only of relative displacement,
$\alpha_{\mathbf{n}\mathbf{m}}=\alpha(\mathbf{m}-\mathbf{n})$.
Define value aggregation with transport:
\[
\mathbf{c}_\mathbf{n}
=\sum_\mathbf{m}\alpha(\mathbf{m}-\mathbf{n})\,
A_\mathbf{n}^{-1}A_\mathbf{m}\,v_\mathbf{m}.
\]
\begin{enumerate}[nosep]
\item[(a)] If the generators commute, then $A_\mathbf{n}^{-1} A_\mathbf{m}$ depends only on $\mathbf{m} - \mathbf{n}$ (by \Cref{thm:interchange}), and the aggregation rule is translation equivariant in the sense of \Cref{def:translation-equiv}.
\item[(b)] If some generators do \emph{not} commute, there exist $\mathbf{m}, \mathbf{n}, \mathbf{u}$ such that $A_{\mathbf{n}+\mathbf{u}}^{-1} A_{\mathbf{m}+\mathbf{u}} \neq A_\mathbf{n}^{-1} A_\mathbf{m}$, so value transport is path-dependent (it depends on absolute positions, not only displacement). The transport is still a valid operation arising from composition; it simply does not reduce to a pure function of displacement.
\end{enumerate}

\begin{proof}
(a) Commutativity gives $A_\mathbf{n}^{-1} A_\mathbf{m} = R_1^{m_1 - n_1} \cdots R_D^{m_D - n_D}$, depending only on $\mathbf{m} - \mathbf{n}$.
For translation equivariance: $F(T_\mathbf{u} v)_{\mathbf{n}+\mathbf{u}} = \sum_\mathbf{m} \alpha(\mathbf{m} - \mathbf{n} - \mathbf{u})\, A_{\mathbf{n}+\mathbf{u}}^{-1} A_\mathbf{m}\, v_{\mathbf{m}-\mathbf{u}}$.
Substituting $\mathbf{m}' = \mathbf{m} - \mathbf{u}$: $= \sum_{\mathbf{m}'} \alpha(\mathbf{m}' - \mathbf{n})\, A_{\mathbf{n}+\mathbf{u}}^{-1} A_{\mathbf{m}'+\mathbf{u}}\, v_{\mathbf{m}'}$.
By commutativity, $A_{\mathbf{n}+\mathbf{u}}^{-1} A_{\mathbf{m}'+\mathbf{u}} = A_\mathbf{n}^{-1} A_{\mathbf{m}'}$ (both depend only on $\mathbf{m}' - \mathbf{n}$).
Hence $F(T_\mathbf{u} v)_{\mathbf{n}+\mathbf{u}} = F(v)_\mathbf{n}$.

(b) Suppose $R_a, R_b$ do not commute (relabel axes so $a < b$).
Let $\mathbf{n} = \mathbf{e}_b$, $\mathbf{m} = \mathbf{e}_a + \mathbf{e}_b$, and $\mathbf{u} = -\mathbf{e}_b$.
Under the canonical order $A_\mathbf{n} = R_b$, $A_\mathbf{m} = R_a R_b$, so $A_\mathbf{n}^{-1} A_\mathbf{m} = R_b^{-1} R_a R_b$.
After shifting: $A_{\mathbf{n}+\mathbf{u}}^{-1} A_{\mathbf{m}+\mathbf{u}} = A_\mathbf{0}^{-1} A_{\mathbf{e}_a} = R_a$.
These are equal iff $R_b^{-1} R_a R_b = R_a$, i.e., $R_a R_b = R_b R_a$---contradicting the assumption.
\end{proof}

\begin{remark}[Noncommuting counterexample]
Take $D=2$ with $R_a = R(\pi/4)$, $R_b = \bigl(\begin{smallmatrix} 1 & 0 \\ 0 & -1 \end{smallmatrix}\bigr)$ (a reflection).
The journey from $(1,1)$ to $(0,1)$ is $A_{(0,1)}^{-1} A_{(1,1)} = R_b^{-1} R_a R_b$, a reflection-conjugated rotation.
After shifting by $\mathbf{u} = -\mathbf{e}_b$: the journey from $(1,0)$ to $(0,0)$ is $A_{\mathbf{0}}^{-1} A_{\mathbf{e}_a} = R_a$, a plain rotation.
These differ: the same displacement gives different value operators depending on absolute position.
\end{remark}

\subsection{SSM Bridge (\Cref{prop:ssm-bridge})}

\begin{proof}
Mamba's recurrence $h_t = A_{x_t} h_{t-1} + B_{x_t} x_t$ unrolls by induction:
$h_T = \sum_{t=1}^T \big(\prod_{s=t+1}^T A_{x_s}\big) B_{x_t} x_t$.
Define $P_{t \to T} = \prod_{s=t+1}^T A_{x_s}$ and $v_t = B_{x_t} x_t$.
Then $h_T = \sum_{t=1}^T P_{t \to T} v_t$, which has the form of~\eqref{eq:unified} with $\alpha_{Tt} = 1$ (uniform, unnormalized weights) and content-dependent path operators.
When $A_{x_t} \in \SO(2)^{d/2}$, each prefix product is a block-diagonal rotation.
The associative scan $P_{t \to T} = A_T \cdot A_{T-1} \cdots A_{t+1}$ can be computed in $O(\log N)$ parallel depth using the standard parallel prefix algorithm.
\end{proof}

\subsection{ALiBi Expressivity Separation}\label{app:alibi}

\begin{proposition}[Representational limitations of additive linear biases]\label{prop:alibi-separation}
Let $d \geq 4$ be even. Compare:
\begin{itemize}[nosep]
\item \textbf{ALiBi}~\citep{press2022train}: $\Att_{\mathrm{ALiBi}}(q,k,n) = q^\top k - m \cdot n$, where $n = s - s'$ is the signed relative position and $m > 0$ is a per-head slope.
\item \textbf{Toral mechanism} (\Cref{thm:toral}): $\Att_{\mathrm{toral}}(q,k,n) = q^\top R_\theta^n\, k$, where $R_\theta = \diag(R(\theta_1), \ldots, R(\theta_{d/2}))$.
\end{itemize}
Then:
\begin{enumerate}[nosep]
\item[(a)] \textbf{Score dimensionality gap.} For fixed $q, k$, the ALiBi score is affine in~$n$: $\Att_{\mathrm{ALiBi}} = q^\top k - mn$. The toral score is a trigonometric polynomial:
\[
\Att_{\mathrm{toral}}(q,k,n) = \sum_{b=1}^{d/2} \alpha_b \cos(n\theta_b) + \beta_b \sin(n\theta_b),
\]
where $\alpha_b = q_{2b-1}k_{2b-1} + q_{2b}k_{2b}$ and $\beta_b = q_{2b-1}k_{2b} - q_{2b}k_{2b-1}$. The ALiBi positional signal spans a $1$-dimensional function space; the toral signal spans a $d$-dimensional space.
\item[(b)] \textbf{Strict separation.} For any period $p \geq 3$, the periodic attention pattern $\phi(n) = \cos(2\pi n / p)$ is representable by the toral mechanism but not by any ALiBi head.
\item[(c)] \textbf{Approximate subsumption.} On any bounded context $|n| \leq N$, the toral mechanism can $\epsilon$-approximate any ALiBi attention pattern for any $\epsilon > 0$.
\item[(d)] \textbf{Value-path deficiency.} ALiBi modifies only attention logits, leaving $P_{j \to k} = I$ for all $j, k$. The value aggregation under ALiBi is order-blind: standard attention with ALiBi computes $\mathbf{c}_k = \sum_j \alpha_{kj} v_j$ with no positional modulation of values, regardless of the slope~$m$.
\end{enumerate}
This separation does not imply that either mechanism is uniformly better for all tasks; it identifies positional functions representable by one class and not the other.
\end{proposition}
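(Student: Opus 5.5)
I will prove the four parts in order. Each part reduces to a direct computation with the block structure of $R_\theta$ or with the affine form of the ALiBi score.

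\textbf{(a).} Since $R_\theta^n=\diag(R(n\theta_1),\ldots,R(n\theta_{d/2}))$, the score $q^\top R_\theta^n k$ splits into a sum over blocks $b$ of $(q_{2b-1},q_{2b})\,R(n\theta_b)\,(k_{2b-1},k_{2b})^\top$. Expanding $R(n\theta_b)$ and collecting the $\cos$ and $\sin$ terms gives exactly $\alpha_b\cos(n\theta_b)+\beta_b\sin(n\theta_b)$ with the stated coefficients. I will also check that $\alpha_b,\beta_b$ are free: taking $k_{2b-1}=1$, $k_{2b}=0$ gives $(\alpha_b,\beta_b)=(q_{2b-1},-q_{2b})$. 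For the dimension counts:
\begin{itemize}[nosep]
\item ALiBi: the positional dependence is $n\mapsto -mn$, plus a constant that does not depend on position. So the position-dependent part lies in $\mathrm{span}\{n\}$, which is one-dimensional.
\item Toral: the functions $\{\cos(n\theta_b),\sin(n\theta_b)\}_{b}$ span a space of dimension at most $d$. Equality holds when the $\theta_b$ are distinct and lie in $(0,\pi)$, with $n$ ranging over $\Z$. This is linear independence of distinct characters $e^{\pm in\theta_b}$ on $\Z$, i.e.\ a Vandermonde argument.
\end{itemize}
I will state (a) with this genericity caveat.

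\textbf{(b).} For representability, take $\theta_1=2\pi/p$, $q=k=e_1$, and all other coordinates zero. Then $\alpha_1=1$, every other coefficient vanishes, and the score equals $\cos(2\pi n/p)$.

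For non-representability, any ALiBi head gives $n\mapsto c-mn$ with $m>0$, where $c=q^\top k$. This function is strictly decreasing in $n$. On the other hand, $\phi$ is $p$-periodic and non-constant when $p\ge3$ (for example $\phi(0)=1\ne\phi(1)$). So $\phi(0)=\phi(p)$, which is incompatible with $c-m\cdot 0\neq c-mp$. Even the weaker requirement of matching $\phi$ up to an additive constant fails for the same reason: a strictly monotone function cannot be periodic. I would point out that this argument does not even need $p\ge3$, only that $\phi$ is non-constant; the bound $p\ge3$ just guarantees that.

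\textbf{(c).} This is the main obstacle, because it needs an approximation step. The target on $|n|\le N$ is $n\mapsto c-mn$, a function on the $2N+1$ points $-N,\ldots,N$. I have two routes.
\begin{itemize}[nosep]
\item Small-angle linearization: with a single block, $\sin(n\theta)/\theta\to n$ uniformly on $|n|\le N$ as $\theta\to0$. Choose $\beta_1=-m/\theta$ via $q,k$ scaled appropriately; then $\beta_1\sin(n\theta)=-m\,\sin(n\theta)/\theta$, and the error is $O(mN^3\theta^2)$. Use a second block with $\theta_2=0$ (or $\cos\to1$) and $\alpha_2=c$ to supply the constant. This needs $d\ge4$, which matches the hypothesis.
\item Exact interpolation: with $N$ small relative to $d$, pick frequencies so that the trigonometric functions span all functions on the finite set. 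But $d$ is fixed while $N$ is arbitrary, so this route is not generally available, and I will use linearization.
\end{itemize}
The delicate point in the linearization is that $\beta_1$ blows up like $1/\theta$, so $q,k$ must have large norm. The statement places no norm bound, so this is allowed, but I will state it explicitly. If the approximation is required at the level of softmax patterns rather than logits, I will add that softmax is continuous on a finite index set, so uniform logit approximation transfers.

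\textbf{(d).} This is definitional. ALiBi alters only the logits, and aggregation uses $\sum_j\alpha_{kj}v_j$, i.e.\ $P_{j\to k}=I$ in \eqref{eq:unified}. Position enters only through the scalar weights $\alpha_{kj}$, and no $v_j$ is transformed. I will close by noting that (a)--(c) compare only function classes, which justifies the final disclaimer.
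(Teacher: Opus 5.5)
Your proposal is correct and follows the paper's proof part by part: the blockwise expansion for (a); $\theta_1=2\pi/p$ with $q=k=e_1$ for (b), where your strict-monotonicity argument does the same job as the paper's bounded-versus-unbounded one; the small-angle linearization $\sin(n\theta_1)/\theta_1\to n$ with $\beta_1=-m/\theta_1$ plus a second block for the constant in (c); and the observation $P_{j\to k}=I$ for (d). Your extra caveats tighten points the paper leaves loose: you require distinct $\theta_b\in(0,\pi)$ for the $d$-dimensional span, which the paper's bare ``distinct'' does not guarantee (e.g.\ $\theta_b=0$ kills a sine), and you use an explicit $\theta_2=0$ block for the constant.

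One correction concerns (a). If you actually expand with the paper's convention for $R(\phi)$ (upper-right entry $-\sin\phi$), you get $q^\top R(\phi)k=(q_1k_1+q_2k_2)\cos\phi+(q_2k_1-q_1k_2)\sin\phi$. So the true coefficient is $\beta_b=q_{2b}k_{2b-1}-q_{2b-1}k_{2b}$, the negative of the stated one, and the paper's proof repeats this slip. You should therefore not claim the computation reproduces the stated $\beta_b$ exactly. Nothing downstream depends on the sign, since $\beta_b$ still ranges over all of $\R$.
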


\begin{proof}
(a) The $b$-th $2 \times 2$ rotation block of $R_\theta^n$ contributes $\alpha_b \cos(n\theta_b) + \beta_b \sin(n\theta_b)$ to $q^\top R_\theta^n k$, with $\alpha_b, \beta_b$ as stated. For distinct $\theta_b$, the $d/2$ cosine and $d/2$ sine terms are linearly independent functions of~$n$, spanning a $d$-dimensional space. ALiBi contributes only the single linear function $n \mapsto -mn$.

(b) Set $\theta_1 = 2\pi/p$ and $q = k = e_1 \in \R^d$. Then $\Att_{\mathrm{toral}}(e_1, e_1, n) = \cos(2\pi n/p)$, which equals $\phi(n)$ for all~$n$. For ALiBi, $q^\top k - mn = \cos(2\pi n/p)$ for all $n$ would require a bounded periodic function to equal an unbounded affine function---impossible.

(c) The key identity is $\sin(n\theta)/\theta \to n$ uniformly on $|n| \leq N$ as $\theta \to 0$, with error $|n - \sin(n\theta)/\theta| \leq |n|^3\theta^2/6$. Choose $\theta_1 = \sqrt{6\epsilon / (mN^3)}$ and $q, k$ in block~1 so that $\beta_1 = -m/\theta_1$ and $\alpha_1 = 0$. Then $|\beta_1 \sin(n\theta_1) - (-mn)| = m |n - \sin(n\theta_1)/\theta_1| \leq mN^3\theta_1^2/6 = \epsilon$ for all $|n| \leq N$. The constant $c_0 = q^\top k$ is matched by the remaining blocks.

(d) ALiBi adds a scalar bias to the attention logit; it does not transform value vectors. Hence $P_{j \to k} = I$ identically.
\end{proof}

\subsection{Cocycle from Primitives}\label{app:cocycle-primitives}

The cocycle axiom ($B_{s,s} = I$, $B_{s,s'}B_{s',s''} = B_{s,s''}$) follows from two simpler primitives:
(P1) Non-degeneracy: each $B_{s,s'}$ is invertible.
(P2) Composability: $B_{s,s''} = B_{s,s'} \cdot B_{s',s''}$.
Proof: Set $s' = s$ in (P2): $B_{s,s''} = B_{s,s} B_{s,s''}$; invertibility of $B_{s,s''}$ gives $B_{s,s} = I$.
Conversely, cocycle $\Rightarrow$ composability is by definition, and $B_{s,s'} \cdot B_{s',s} = B_{s,s} = I$ gives invertibility with inverse $B_{s,s'}^{-1} = B_{s',s}$.

\section{Additional Theoretical Results}\label{app:additional-theory}

This section collects additional theoretical results that complement the core framework.

\subsection{DFT Sufficiency for White Sources}\label{app:dft-sufficiency}

\begin{proposition}[DFT sufficiency for white sources]\label{prop:dft-sufficiency}
Fix $K = d/2$ distinct frequencies $\theta_1, \ldots, \theta_K \in [0, 2\pi)$ and let $N \to \infty$. If the source is white ($S(\omega) = S_0$ for all $\omega$), then any such set asymptotically achieves the same mutual information $I(X; Y) \to \frac{d}{2}\log(1 + NS_0/\sigma^2)$. In particular, equispaced DFT frequencies $\theta_k = 2\pi k/K$ achieve this bound.
\end{proposition}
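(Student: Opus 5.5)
We first fix an explicit Gaussian channel model, since the statement leaves it implicit. The source is a stationary Gaussian sequence $x_1,\ldots,x_N$ with spectral density $S(\omega)$. The compositional embedding maps it to $X=\sum_{t=1}^N A_t\mathbf{x}_t\in\R^d$ with $A_t=\diag(R(t\theta_1),\ldots,R(t\theta_K))$, and $Y=X+Z$ with $Z\sim\mathcal{N}(0,\sigma^2 I_d)$. Identifying each rotation plane with $\mathbb{C}$, block $k$ of $X$ becomes $\sum_t e^{it\theta_k}x_t$, i.e. the finite Fourier coefficient of the source at $\theta_k$. The statement is thus really about jointly Gaussian Fourier coefficients, and
\[
I(X;Y)=\tfrac12\log\det\!\big(I_d+\sigma^{-2}\Sigma_X\big),
\]
where $\Sigma_X$ is the $d\times d$ covariance of $X$. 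The plan is to compute $\Sigma_X$ under whiteness and then take $N\to\infty$.

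\textbf{Step 1 (diagonal blocks).} Since $S\equiv S_0$, the source is i.i.d.\ with variance $S_0$ (under the usual normalization). The covariance of block $k$ is then $\sum_t R(t\theta_k)\,S_0 I_2\,R(t\theta_k)^\top=NS_0I_2$. Here orthogonality makes the result exact for every $N$ and every frequency. If the source is scalar and embedded along a fixed direction, we would instead get a rank-one term $\sum_t R(t\theta_k)uu^\top R(t\theta_k)^\top$. Its average is $\tfrac{N}{2}I_2$ plus a term $\sum_t\cos(2t\theta_k)$ that is $O(1)$ provided $2\theta_k\not\equiv0$. So under either modelling choice the diagonal blocks are $\asymp NS_0 I_2$. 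I would pick the normalization that matches the stated constant $NS_0$.

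\textbf{Step 2 (off-diagonal blocks).} The cross-covariance between blocks $j\neq k$ is $S_0\sum_t R(t(\theta_j-\theta_k))$. This is a Dirichlet-kernel sum, bounded by $1/|\sin((\theta_j-\theta_k)/2)|$ independently of $N$, because the frequencies are distinct mod $2\pi$. Hence
\[
\Sigma_X=NS_0\,(I_d+E_N),\qquad \|E_N\|=O(1/N).
\]

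\textbf{Step 3 (asymptotics).} Factor the log-determinant as
\[
\log\det(I+\sigma^{-2}\Sigma_X)=d\log(1+NS_0/\sigma^2)+\log\det\!\big(I+\tfrac{NS_0/\sigma^2}{1+NS_0/\sigma^2}E_N\big).
\]
The second term is $O(d\|E_N\|)\to0$. Multiplying by $\tfrac12$ gives $\tfrac d2\log(1+NS_0/\sigma^2)+o(1)$, and this holds for any distinct set. For equispaced DFT frequencies $\theta_k=2\pi k/K$, $E_N$ vanishes exactly whenever $K\mid N$, because the geometric sums over full periods are zero. This gives the "in particular" clause, and it holds even exactly along that subsequence.

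\textbf{Main obstacle.} The main difficulty is pinning down the model, meaning the embedding normalization and the meaning of "$\to$". The right-hand side diverges, so convergence must be read as a difference tending to $0$, or as a ratio tending to $1$. I would state the difference version, which the Step 3 bound gives. I would also address the edge frequencies $\theta_k\in\{0,\pi\}$ in the scalar-source variant, where the $\cos(2t\theta)$ sum is not bounded. There I would either assume a complex (two-channel) source per plane, which makes Step 1 exact, or exclude those frequencies and note that they change only $O(1)$ of the $d$ effective dimensions.
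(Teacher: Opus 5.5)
Your argument is correct and has the same core as the paper's proof. Near-orthogonality of Fourier columns at fixed distinct frequencies makes the normalized Gram matrix tend to $I_K$. The log-determinant then splits into $K$ terms, and whiteness makes those terms equal and independent of the $\theta_k$.

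The two proofs differ in the channel model and in how the key estimate is obtained:
\begin{itemize}
\item The paper uses the synthesis model $Y=\Phi_\theta X+Z$. There $X\in\mathbb{C}^K$ holds spectral amplitudes with variances $S(\theta_k)$, the noise sits on the $N$ time samples, and the paper cites the large-sieve inequality for $\Phi_\theta^H\Phi_\theta/N\to I_K$.
\item You use the analysis (embedding) model: the source lives in time, and the noise is added to the $d$-dimensional embedding. You prove the estimate directly from the Dirichlet-kernel bound.
\end{itemize}
For a white source both models reduce to a log-determinant of $I_K+(S_0/\sigma^2)\Phi_\theta^H\Phi_\theta$, up to conjugation and real/complex bookkeeping. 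In the paper's model this follows from Sylvester's determinant identity.

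Your route gives an explicit $O(d/N)$ error. It also makes precise that the convergence means a difference tending to $0$, which the paper leaves implicit even though the right-hand side diverges. It yields an exact identity on the DFT grid when $K\mid N$, and it matches the embedding $\sum_t R^t v_t$ more literally. The paper's route handles arbitrary $S$ in one line, giving $\sum_k\log(1+NS(\theta_k)/\sigma^2)$, which the subsequent MSE proposition reuses. In your model a non-white source would need an additional Fej\'er-kernel argument.

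One caution about your hedge: the scalar-source variant is not merely a normalization choice. In that model the cross-block covariance also contains $\sum_t e^{it(\theta_j+\theta_k)}$, which is of order $N$whenever $\theta_j+\theta_k\equiv 0 \pmod{2\pi}$. The equispaced grid $\{2\pi k/K\}$ is closed under negation. So for a real scalar source, block $K-k$ is exactly the reflection $\diag(1,-1)$ of block $k$, and about half the coordinates are redundant. The information then grows like $\frac d4\log N$ rather than $\frac d2\log N$. The DFT clause therefore fails in that model for reasons beyond the edge frequencies $\{0,\pi\}$. You should commit to the two-channel (complex circular) source, which your Step 2 formula $S_0\sum_t R(t(\theta_j-\theta_k))$ already assumes.
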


\begin{proof}
Since $K$ is fixed and the frequencies are distinct, their pairwise separation $\delta > 0$ is constant. By the large-sieve inequality, $\Phi_\theta^H \Phi_\theta / N \to I_K$ as $N \to \infty$, so $I(X; Y) \to \sum_{k=1}^{K} \log(1 + N S(\theta_k)/\sigma^2)$. When $S(\omega) = S_0$ is constant, every term equals $\log(1 + NS_0/\sigma^2)$ regardless of the $\theta_k$, giving $I(X;Y) \to \frac{d}{2}\log(1 + NS_0/\sigma^2)$.
\end{proof}

\subsection{MSE Gain of Learned Frequencies}\label{app:mse-gain}

\begin{proposition}[MSE gain of learned frequencies]\label{prop:mse-gap}
Fix $K = d/2$ and let $N \to \infty$.
For a source with non-uniform power spectral density (PSD) $S(\omega)$ whose $K$ largest values occur at distinct frequencies $\theta_1^*, \ldots, \theta_K^*$, the asymptotic MSE gap between optimal learned frequencies and fixed DFT frequencies satisfies:
\[
\mathrm{MSE}_{\mathrm{DFT}} - \mathrm{MSE}_{\mathrm{opt}} \;\geq\; 0 \qquad (N \to \infty),
\]
with equality if and only if the DFT frequencies already sample the $K$ largest values of $S(\omega)$.
\end{proposition}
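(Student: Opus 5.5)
The plan is to reuse the asymptotic diagonalization from the proof of \Cref{prop:dft-sufficiency} and reduce the MSE comparison to a per-frequency optimization. With $K$ fixed and the frequencies distinct, the large-sieve argument gives $\Phi_\theta^H\Phi_\theta/N \to I_K$. The linear-Gaussian observation model $Y = \Phi_\theta^H X + \text{noise}$ then decouples asymptotically into $K$ independent scalar channels, one per sampled frequency. The LMMSE error of the $k$-th channel is $S(\theta_k)\sigma^2/(\sigma^2 + N S(\theta_k))$, so the retained signal power is $S(\theta_k)\cdot N S(\theta_k)/(\sigma^2 + N S(\theta_k))$. The total MSE is the total source power minus the sum of these retained powers:
\[
\mathrm{MSE}(\theta) \to C - \sum_{k=1}^K g\bigl(S(\theta_k)\bigr),
\]
with $g(s) = N s^2/(\sigma^2+Ns)$ (asymptotically $g(s)\approx s$). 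Here $C$ does not depend on $\theta$.

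The key steps, in order, are as follows.

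\textbf{(1) Decoupling.} Establish the decoupled form above, citing the large-sieve step exactly as in \Cref{prop:dft-sufficiency}.

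\textbf{(2) Monotonicity of $g$.} Check that $g$ is strictly increasing on $s \ge 0$. Its derivative is $g'(s) = N s(Ns + 2\sigma^2)/(\sigma^2+Ns)^2 > 0$ for $s>0$.

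\textbf{(3) Rearrangement.} Minimizing $\mathrm{MSE}(\theta)$ over distinct $K$-tuples is the same as maximizing $\sum_k g(S(\theta_k))$. Since $g$ is increasing, the sum is maximized by picking the $K$ largest values of $S$, namely at $\theta_1^*,\ldots,\theta_K^*$. This gives $\mathrm{MSE}_{\mathrm{opt}} \le \mathrm{MSE}(\theta^{\mathrm{DFT}})$, which is the inequality.

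\textbf{(4) Equality case.} Sort both multisets $\{S(\theta_k^{\mathrm{DFT}})\}$ and $\{S(\theta_k^*)\}$ in decreasing order. Termwise, each DFT value is at most the corresponding optimal value. Because $g$ is strictly increasing, the sums are equal iff the sorted values coincide. That happens exactly when the DFT grid samples the $K$ largest values of $S$, which is the claimed characterization.

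The main obstacle is step (1): justifying that the finite-$N$ MSE converges to the decoupled expression uniformly enough to compare limits. The comparison must also hold when $\mathrm{MSE}$ itself grows or shrinks with $N$. I would handle this by normalizing first, working with $\mathrm{MSE}/N$ or with the gap directly. I would then bound the off-diagonal Gram perturbation by $O(1/(N\delta))$ via the large sieve, so the error it induces in each term is $o(1)$. Here $\delta$ is the minimum separation of the frequencies.

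A secondary subtlety is the phrase ``equality'' when the gap is only asymptotic. I would interpret it as the limit of the difference being zero. For that interpretation, the strict monotonicity in step (4) must survive the limit. In the regime $g(s)\to s$ it does, since the limiting map $s \mapsto s$ is still strictly increasing.
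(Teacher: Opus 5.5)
Your argument is correct and shares the paper's skeleton: asymptotic decoupling, a per-frequency term monotone in $S$, then top-$K$ selection. The difference is which quantity you make monotone, and that difference is what makes the proof work. The paper writes $\mathrm{MSE}_\theta=\sum_k \sigma^2 S(\theta_k)/(\sigma^2+NS(\theta_k))$ and asserts each term is decreasing in $S(\theta_k)$. In fact its derivative is $\sigma^4/(\sigma^2+NS)^2>0$. That sum only counts residual error on the $K$ sampled coefficients, and it would be minimized by sampling the \emph{smallest} PSD values. Your accounting, $\mathrm{MSE}(\theta)=\mathrm{tr}\,\Sigma_X-\sum_k g(S(\theta_k))$ with $g(s)=Ns^2/(\sigma^2+Ns)$ strictly increasing, charges the unobserved spectral energy to the error. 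That is exactly why placing the frequencies on the PSD peaks is optimal. Your sorted-multiset comparison, together with strict monotonicity of $g$ (and of its limit $s\mapsto s$), also proves the equality characterization, which the paper only asserts. The inequality alone is immediate, since the DFT grid is one admissible choice.

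Two adjustments to step (1):
\begin{enumerate}
\item Do not pass to $\mathrm{MSE}/N$. The MSE is $O(N)$ while the gap is $O(1)$, so normalized differences vanish trivially. Instead, use that $\mathrm{tr}\,\Sigma_X$ cancels exactly in the gap, and control the retained-power sums to within $o(1)$.
\item The estimates you need there are $\phi_k^H\Sigma_X\phi_l=o(N)$ for $k\neq l$, $\phi_k^H\Sigma_X\phi_k=NS(\theta_k)+o(N)$, and $\|\Sigma_X\phi_k\|^2=NS(\theta_k)^2+o(N)$. These are Fej\'er-kernel/Szeg\H{o} facts that require some regularity of $S$, such as an absolutely summable autocovariance. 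Near-orthogonality of $\Phi_\theta^H\Phi_\theta$, the only thing the large sieve controls, does not give them on its own.
\end{enumerate}
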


\begin{proof}
With $K$ fixed and frequencies distinct, the large-sieve gives $\Phi_\theta^H \Phi_\theta / N \to I_K$, so the MSE for estimating $X$ from $Y = \Phi_\theta X + Z$ reduces to
$\mathrm{MSE}_\theta = \sum_{k=1}^{K} \frac{\sigma^2 S(\theta_k)}{\sigma^2 + N S(\theta_k)}$.
Since each term is a decreasing function of $S(\theta_k)$, the MSE is minimized by choosing $\theta_1^*, \ldots, \theta_K^*$ at the $K$ largest values of $S(\omega)$. The DFT places frequencies at $\theta_k = 2\pi k/K$, which generically miss these peaks. The inequality follows, with equality exactly when the DFT grid coincides with the spectral maxima.
\end{proof}

Together, \Cref{prop:dft-sufficiency,prop:mse-gap} explain the MNIST results (\Cref{app:mnist-results}): for white sources the DFT suffices, but for structured signals (images), learned frequencies select task-relevant spectral peaks that the DFT grid generically misses.

\subsection{Context-Sensitive Separation}\label{app:context-separation}

\begin{proposition}[Context-sensitive separation]\label{prop:context-sep}
Let $\mathbf{x} = (x_1, \ldots, x_N)$ and $\mathbf{x}'$ be two sequences that differ only at position~$s$, with $x_s \neq x_s'$. Fix positions $j < s < k$.
\begin{enumerate}[nosep]
\item[(a)] \textbf{Content-independent operators.} If $P_{j \to k} = R^{k-j}$, then the value-path transform applied to $v_j$ at output position~$k$ is the same for both sequences: $P_{j \to k} = P'_{j \to k}$. Intervening content is invisible to the value path.
\item[(b)] \textbf{Content-dependent operators.} If $P_{j \to k} = \prod_{t=j}^{k-1} \tilde{R}_{x_t}$ where $\tilde{R}_{x_t} = h(x_t) \in \GL(d)$ is a learned function of token content, and $\tilde{R}_{x_s} \neq \tilde{R}_{x_s'}$, then $P_{j \to k} \neq P'_{j \to k}$. For any $v_j$ outside a proper subspace of~$\R^d$ (i.e., for generic inputs), the value-path outputs differ: $P_{j \to k}\, v_j \neq P'_{j \to k}\, v_j$.
\end{enumerate}
\end{proposition}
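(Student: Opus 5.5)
Part (a) is immediate. $P_{j\to k}=R^{k-j}$ is a function of the displacement $k-j$ alone. The sequences $\mathbf{x}$ and $\mathbf{x}'$ share all positions, so $P_{j\to k}=R^{k-j}=P'_{j\to k}$. The transform applied to $v_j$ therefore coincides, and the change at $x_s$ never enters the operator.

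For part (b), I would factor the product around position $s$. Write
\[
P_{j\to k} = L\,\tilde R_{x_s}\,U,\qquad P'_{j\to k} = L\,\tilde R_{x'_s}\,U,
\]
where $L$ and $U$ are the products of the factors after and before index $s$, respectively, in whatever order the product is taken. These factors involve only positions $t\neq s$, where $x_t=x'_t$, so $L$ and $U$ are common to both sequences. Every factor lies in $\GL(d)$, so $L$ and $U$ are invertible. Then
\[
P_{j\to k}-P'_{j\to k}=L\,(\tilde R_{x_s}-\tilde R_{x'_s})\,U .
\]
The middle factor is a nonzero matrix by hypothesis, and multiplying a nonzero matrix on both sides by invertible matrices keeps it nonzero. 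Hence $P_{j\to k}\neq P'_{j\to k}$.

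For the generic-input claim, let $\Delta=P_{j\to k}-P'_{j\to k}\neq 0$. The set of $v_j$ with $P_{j\to k}v_j=P'_{j\to k}v_j$ is exactly $\ker\Delta$. Because $\Delta\neq 0$, $\mathrm{rank}\,\Delta\geq 1$, so $\ker\Delta$ has dimension at most $d-1$ and is a proper subspace of $\R^d$. It is closed with empty interior and has Lebesgue measure zero. This is precisely the statement that the value-path outputs differ for every $v_j$ outside that subspace.

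The main obstacle is bookkeeping rather than mathematics. The product ranges over $t=j,\dots,k-1$, so I need $j\le s\le k-1$, which follows from $j<s<k$. I also need to fix the product ordering consistently for both sequences; since the argument only uses that $L$ and $U$ are shared and invertible, either ordering works. Invertibility comes from the assumption $h(x_t)\in\GL(d)$, and if the factors were allowed to be singular, the factorization step would fail.
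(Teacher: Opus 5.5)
Your proposal is correct and matches the paper's proof essentially step for step: part (a) is immediate, and for part (b) both factor $P_{j\to k}=L\,\tilde R_{x_s}\,M$ with shared invertible outer factors, so that $P_{j\to k}-P'_{j\to k}=L(\tilde R_{x_s}-\tilde R_{x'_s})M\neq 0$, and both conclude that its kernel is a proper subspace. Your explicit check that the ordering of the product does not matter is a small extra piece of care. It is relevant because the paper's proof puts earlier positions on the left, while its path-transport convention puts later steps on the left.
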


\begin{proof}
Part (a) is immediate: $R^{k-j}$ depends only on positions, not content.

For part (b), factor the shared portions:
$P_{j \to k} = L\, \tilde{R}_{x_s}\, M$ and $P'_{j \to k} = L\, \tilde{R}_{x_s'}\, M$,
where $L = \prod_{t=j}^{s-1} \tilde{R}_{x_t}$ and $M = \prod_{t=s+1}^{k-1} \tilde{R}_{x_t}$ are common invertible factors. Then
$P_{j \to k} - P'_{j \to k} = L\bigl(\tilde{R}_{x_s} - \tilde{R}_{x_s'}\bigr) M$.
Since $L, M \in \GL(d)$ and $\tilde{R}_{x_s} \neq \tilde{R}_{x_s'}$, the difference is a nonzero linear map, so its kernel is a proper subspace of~$\R^d$.
\end{proof}

This result motivates JoFormer-projected: content-dependent journey operators allow the value path to encode contextual information that fixed-angle operators cannot capture, providing a formal separation between the JoFormer-fixed and JoFormer-projected variants.

\subsection{Shift-Invariance and Order-Awareness}\label{app:shift-order}

We show that the compositional embedding simultaneously achieves two desirable properties: invariance to uniform shifts (for translation-invariant tasks) and sensitivity to element ordering (for sequence-dependent tasks).

\begin{definition}[$m$-Representations]\label{def:m-rep}
For a sequence of length $N$ with embeddings $a_1, \ldots, a_N \in \R^d$, choose a window length $m$. For each window starting at position $k$, form the \textbf{window embedding}:
\begin{equation}\label{eq:window}
s_k = \sum_{i=1}^{m} R^{i-1} a_{k+i-1},
\end{equation}
where $R \in \Od$ is a fixed block-diagonal rotation. Partition $s_k$ into $K = d/2$ blocks $s_{k,1}, \ldots, s_{k,K}$ and define the \textbf{magnitude vector}:
\begin{equation}\label{eq:mag-vec}
v_k = \big(\|s_{k,1}\|, \ldots, \|s_{k,K}\|\big) \in \R^K.
\end{equation}
Under circular boundary conditions (indices modulo $N$), the \textbf{circular $m$-representation} is $v^{\circ} = \sum_{k=1}^{N} v_k$.
\end{definition}

\begin{theorem}[Shift-invariance]\label{thm:shift-inv}
For orthonormal $R \in \Od$, the circular $m$-representation $v^{\circ}$ is invariant under uniform circular shifts of the input along any axis.
\end{theorem}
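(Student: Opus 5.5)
The plan is to observe that a uniform circular shift only permutes the window embeddings, and that $v^{\circ}$ is a sum over all windows, so the permutation is invisible. Let the shifted sequence be $a'_i = a_{i-u}$ with indices taken modulo $N$, for a fixed shift $u \in \Z$. Write $s'_k$ and $v'_k$ for the window embeddings and magnitude vectors of the shifted input, built from \eqref{eq:window} and \eqref{eq:mag-vec}.

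\textbf{Step 1: shifted windows are old windows.} Substituting into \eqref{eq:window} gives
\[
s'_k = \sum_{i=1}^{m} R^{i-1} a_{k-u+i-1} = s_{k-u}.
\]
The window coefficients $R^{i-1}$ depend only on the offset $i$ inside the window, never on the absolute position $k$. So each shifted window embedding is literally an unshifted one, and no rotation factor is picked up. Consequently every block satisfies $s'_{k,b} = s_{k-u,b}$, and hence $v'_k = v_{k-u}$.

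\textbf{Step 2: reindex the sum.} With circular indexing, $k \mapsto k-u \pmod N$ is a bijection of $\{1,\ldots,N\}$. Therefore
\[
v'^{\circ} = \sum_{k=1}^N v_{k-u} = \sum_{k=1}^N v_k = v^{\circ}.
\]

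\textbf{Multi-axis case.} For "any axis", I would treat the nD setting with window embeddings $s_{\mathbf{k}} = \sum_{\mathbf{i}} A_{\mathbf{i}} a_{\mathbf{k}+\mathbf{i}}$ over an $m$-box, with $A_{\mathbf{i}}$ the absolute operator of the window offset. The same argument then applies verbatim with $\mathbf{k} \mapsto \mathbf{k}-\mathbf{u}$ on the torus $(\Z/N)^D$. Commutativity from \Cref{thm:interchange} is not even needed, since $A_{\mathbf{i}}$ depends only on the offset.

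\textbf{Role of orthonormality, and the main obstacle.} Orthonormality of $R$ is not used in the core argument. It matters only for a stronger variant: a shift that also re-anchors the positional phase, i.e.\ an embedding that uses absolute $R^{k+i-1}$ instead of the relative $R^{i-1}$. In that variant $s'_k = R^{\pm u} s_{k-u}$, and one needs each $2\times2$ block of $R^{u}$ to be a rotation, which is the block-diagonal orthogonal structure, to conclude $\|R^u s_{k,b}\| = \|s_{k,b}\|$. I expect the only real care needed is fixing this convention, deciding whether the shift acts on content only or also on the phase reference. I would state the lemma covering both cases, with the norm-preservation step handled blockwise.
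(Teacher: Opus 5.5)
Your proof is correct and follows the paper's argument. Because the window coefficients $R^{i-1}$ depend only on the in-window offset, a circular shift merely relabels windows ($s'_k = s_{k-u}$; the paper writes out the case $u=-1$ as $s'_k = s_{k+1}$), so $v^{\circ}$ is unchanged, and neither your proof nor the paper's uses orthonormality. One caution on your optional absolute-phase variant, which the theorem does not need: under circular indexing, the terms of a window straddling the wrap-around pick up phase factors that differ by $R^{N}$, so the identity $s'_k = R^{\pm u}s_{k-u}$ can fail for such windows unless $R^N = I$.
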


\begin{proof}
A uniform circular shift by one replaces $a_t$ with $a_{t+1 \bmod N}$ in each window. The shifted window at position $k$ contains elements $a_{k+1}, \ldots, a_{k+m}$ (mod $N$), which is the original window at position $k+1$. So $s_k' = s_{k+1 \bmod N}$, and the multiset of window embeddings $\{s_k\}_{k=1}^{N}$ is unchanged. Hence the sum of magnitude vectors $v^{\circ} = \sum_k v_k$ is invariant.
\end{proof}

\begin{theorem}[Order-awareness]\label{thm:order-aware}
Let $R \in \Od$ be a block-diagonal rotation matrix with $K = d/2$ blocks having angles $\theta_1, \ldots, \theta_K$, at least one of which satisfies $\theta_\ell / \pi \notin \mathbb{Q}$.
For elements $a_1, \ldots, a_m \in \R^d$ in general position, the magnitude vector $v = (\|s^{(1)}\|, \ldots, \|s^{(K)}\|)$ of the window embedding $s = \sum_{i=1}^m R^{i-1} a_i$ distinguishes different permutations: if $\sigma \neq \mathrm{id}$, then generically $v(a_1, \ldots, a_m) \neq v(a_{\sigma(1)}, \ldots, a_{\sigma(m)})$.
\end{theorem}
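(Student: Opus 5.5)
The plan is to reduce the claim to showing that one block's squared magnitude, viewed as a polynomial in the entries of $a_1,\ldots,a_m$, is not invariant under the permutation $\sigma$; genericity then follows because the zero set of a nonzero polynomial is a proper algebraic subset (measure zero, nowhere dense). Concretely, I fix a block $\ell$ whose angle $\theta := \theta_\ell$ satisfies $\theta/\pi \notin \mathbb{Q}$. I identify the $\ell$-th $2$-plane of each $a_i$ with a complex number $z_i \in \mathbb{C}$, so that $R^{i-1}$ acts on it by multiplication by $e^{\mathrm{i}(i-1)\theta}$. It suffices to show that
\[
f(z) := \Bigl|\sum_{i=1}^m e^{\mathrm{i}(i-1)\theta} z_i\Bigr|^2 - \Bigl|\sum_{i=1}^m e^{\mathrm{i}(i-1)\theta} z_{\sigma(i)}\Bigr|^2
\]
is not identically zero as a real polynomial in $(\mathrm{Re}\, z_i, \mathrm{Im}\, z_i)$. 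Then $\|s^{(\ell)}\|$ already differs on a generic set, and hence so does the whole vector $v$.

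\textbf{Expanding the two terms.} The first term expands as
\[
\sum_p |z_p|^2 + 2\sum_{p<q} \mathrm{Re}\bigl(z_p \bar z_q\, e^{\mathrm{i}(p-q)\theta}\bigr).
\]
For the permuted term, I reindex with $p=\sigma(i)$, $q=\sigma(j)$, which gives
\[
\sum_p |z_p|^2 + 2\sum_{p<q} \mathrm{Re}\bigl(z_p \bar z_q\, e^{\mathrm{i}(\sigma^{-1}(p)-\sigma^{-1}(q))\theta}\bigr).
\]
Here I use that the $(p,q)$ and $(q,p)$ contributions combine into a single real part, since their complex coefficients are conjugates. The diagonal terms cancel, so
\[
f = 2\sum_{p<q} \mathrm{Re}\bigl(z_p\bar z_q\, c_{pq}\bigr), \qquad c_{pq} = e^{\mathrm{i}(p-q)\theta} - e^{\mathrm{i}(\sigma^{-1}(p)-\sigma^{-1}(q))\theta}.
\]
For distinct pairs $p<q$, the real bilinear forms $\mathrm{Re}(z_p\bar z_q c)$ involve disjoint sets of monomials (products of a coordinate of $z_p$ with a coordinate of $z_q$). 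Moreover, each such form is the zero polynomial iff $c=0$. Hence $f\not\equiv 0$ iff some $c_{pq}\neq 0$.

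\textbf{Combinatorial and arithmetic step.} This is the step I expect to be the crux, though it is short. Suppose that for every pair $p<q$ we had $\sigma^{-1}(p)-\sigma^{-1}(q) = p-q$. Then $\sigma^{-1}(p)-p$ would be a constant $c$. A constant shift mapping $\{1,\ldots,m\}$ onto itself must have $c=0$, which forces $\sigma=\mathrm{id}$, a contradiction. So there is a pair $p<q$ for which $k := (p-q) - (\sigma^{-1}(p)-\sigma^{-1}(q))$ is a nonzero integer. Now $c_{pq}=0$ would mean $k\theta \in 2\pi\mathbb{Z}$, i.e.\ $\theta/\pi \in \mathbb{Q}$, contradicting the hypothesis on $\theta_\ell$. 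Hence $c_{pq}\neq 0$, $f$ is a nonzero polynomial, and its zero set is a proper algebraic subset.

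\textbf{Conclusion.} Taking the union over the finitely many $\sigma\neq\mathrm{id}$ of these proper algebraic subsets, I get a single generic set of inputs on which every non-identity permutation changes $v$. The main care points are the bookkeeping that merges the $(p,q)$ and $(q,p)$ terms correctly, and checking that distinct pairs really give linearly independent real forms. The genericity notion should be read as ``outside a proper real-algebraic subset of $(\R^d)^m$.''
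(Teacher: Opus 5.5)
Your proposal is correct and follows essentially the same route as the paper. The paper also fixes the block with $\theta_\ell/\pi\notin\mathbb{Q}$ and writes the difference of squared block magnitudes as a Hermitian form with coefficients $\omega^{p-q}-\omega^{\sigma^{-1}(p)-\sigma^{-1}(q)}$. It then uses the same constant-shift argument together with the infinite order of $\omega$ to find a nonzero coefficient, and concludes via a finite union of proper algebraic, measure-zero sets. Your explicit check that distinct pairs use disjoint monomials makes rigorous a step the paper leaves implicit: that a nonzero Hermitian matrix yields a form that is not identically zero.
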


\begin{proof}
In the $\ell$-th block with angle $\theta_\ell$, represent the block component of $a_i$ as $z_i^{(\ell)} \in \mathbb{C}$, so $s^{(\ell)} = \sum_{i=1}^m \omega_\ell^{i-1}\, z_i^{(\ell)}$ with $\omega_\ell = e^{\mathbf{i}\theta_\ell}$.

For a fixed non-identity permutation $\sigma$, define the collision set
$\mathcal{C}_\sigma = \{(a_1, \ldots, a_m) \in \R^{dm} : \|s^{(\ell)}\|^2 = \|s'^{(\ell)}\|^2 \;\text{for all } \ell\}$,
where $s'^{(\ell)} = \sum_i \omega_\ell^{i-1} z_{\sigma(i)}^{(\ell)}$. Each equation is polynomial, so $\mathcal{C}_\sigma$ is an algebraic variety.

It suffices to show $\mathcal{C}_\sigma \neq \R^{dm}$. Choose block $\ell$ with $\theta_\ell / \pi \notin \mathbb{Q}$ and set $\omega = \omega_\ell$. The difference of squared magnitudes is a Hermitian form:
\[
|s|^2 - |s'|^2 = \mathbf{z}^* H\, \mathbf{z}, \qquad H_{ij} = \omega^{i-j} - \omega^{\sigma^{-1}(i) - \sigma^{-1}(j)}.
\]
For $\sigma \neq \mathrm{id}$, there exist $i, j$ with $i - j \neq \sigma^{-1}(i) - \sigma^{-1}(j)$ (otherwise $\sigma^{-1}(i) = i + c$ for constant $c$, forcing $c = 0$, contradicting $\sigma \neq \mathrm{id}$). Since $\theta_\ell / \pi \notin \mathbb{Q}$, the element $\omega$ has infinite order, so $H_{ij} \neq 0$ and the Hermitian form is not identically zero.

Hence $\mathcal{C}_\sigma$ is a proper algebraic subvariety of measure zero. The union over all $m! - 1$ non-identity permutations is still measure zero.
\end{proof}

\subsection{Structured Concatenation}\label{app:concatenation}

\begin{definition}[Axis-$k$ concatenation]\label{def:concat}
Given two compositional embeddings
$X = (a;\; R_1^{n_1}, \ldots, R_k^{n_k}, \ldots, R_D^{n_D})$ and $Y = (b;\; R_1^{n_1}, \ldots, R_k^{m_k}, \ldots, R_D^{n_D})$
that share the same axis-step generators $R_i$ and the same exponents on all axes $i \neq k$, their concatenation along axis $k$ is:
\begin{equation}\label{eq:concat}
X \oplus_k Y = \big(a + R_k^{n_k}\, b;\;\; R_1^{n_1}, \ldots, R_k^{n_k + m_k}, \ldots, R_D^{n_D}\big).
\end{equation}
\end{definition}

\begin{theorem}[Concatenation invertibility]\label{thm:concat-inv}
Given $X \oplus_k Y$, the axis dimensions $(n_k, m_k)$, and invertibility of $R_k$, either component is uniquely recoverable from the other:
\begin{enumerate}[nosep]
\item[(a)] Given $a$ (the value of $X$): $b = (R_k^{n_k})^{-1}\big((X \oplus_k Y)_{\mathrm{value}} - a\big)$.
\item[(b)] Given $b$ (the value of $Y$): $a = (X \oplus_k Y)_{\mathrm{value}} - R_k^{n_k}\, b$.
\end{enumerate}
\end{theorem}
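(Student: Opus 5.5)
The plan is to read off the value component of the concatenation in \eqref{eq:concat} and solve it as a linear equation in the unknown summand. Write $c := (X \oplus_k Y)_{\mathrm{value}} = a + R_k^{n_k} b$. The operator part of $X \oplus_k Y$ is not needed beyond the given axis dimensions $(n_k, m_k)$. We use $n_k$ to form $R_k^{n_k}$. The exponent $m_k$ serves only to confirm consistency, since the accumulated exponent $n_k + m_k$ lets us check that the split is the claimed one. Under composition the same shared exponents on axes $i \neq k$ appear on both operands, so those exponents carry no information about $a$ or $b$, and we can ignore them.

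For part (b), given $b$, we compute $R_k^{n_k} b$ directly from the known generator and exponent, and subtract it from $c$ to get $a = c - R_k^{n_k} b$. Uniqueness is immediate, because $a$ enters $c$ additively with coefficient $I$: two candidates $a, a'$ producing the same $c$ must satisfy $a - a' = 0$. Part (b) does not even use invertibility.

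For part (a), given $a$, we form $c - a = R_k^{n_k} b$. We then need $R_k^{n_k}$ to be invertible. This follows because $R_k \in G \leq \Od$ is orthogonal, or more generally because $R_k$ is assumed invertible: a power of an invertible matrix is invertible, with $(R_k^{n_k})^{-1} = R_k^{-n_k}$. This holds for negative $n_k$ too, since $n_k \in \Z$ and $R_k^{n_k}$ is already defined through $R_k^{-1}$. Applying this inverse gives $b = (R_k^{n_k})^{-1}(c - a)$. Uniqueness follows from injectivity: if $R_k^{n_k} b = R_k^{n_k} b'$, then $b = b'$.

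The only point needing care is the invertibility of $R_k^{n_k}$ in (a), together with the observation that $n_k$ must actually be known. Without the axis dimensions one cannot tell which power to invert, and for orthogonal $R_k$ of finite order several powers may coincide. This is why the hypothesis supplies $(n_k, m_k)$ explicitly. I expect no real obstacle, because the statement is a direct consequence of the affine form of $\circ_k$. I would also remark that in the orthogonal case the inverse is simply $(R_k^{n_k})^\top$, so recovery is numerically stable and norm-preserving: $\|b\| = \|c - a\|$.
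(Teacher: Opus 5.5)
Your proposal is correct and matches the paper's proof: both read off $c = a + R_k^{n_k} b$ from the definition of $\oplus_k$ and solve it, subtracting $R_k^{n_k} b$ for (b) and applying $(R_k^{n_k})^{-1}$ (which exists because $R_k$ is invertible) for (a). Your uniqueness and norm-preservation remarks are harmless extras that the paper does not include.
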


\begin{proof}
From~\eqref{eq:concat}, the value component is $c = a + R_k^{n_k} b$. Since $R_k \in \GL(d)$, the matrix $R_k^{n_k}$ is invertible, so given $a$: $b = (R_k^{n_k})^{-1}(c - a)$, and given $b$: $a = c - R_k^{n_k} b$.
\end{proof}

\section{Classical Transforms and Fixed-Operator Limits}\label{app:classical-transforms}

The monoidal embedding $E = \sum_{t=0}^{N-1} R^t v_t$ directly recovers
Fourier-style spectral features when $R$ is a fixed block rotation. Fixed-sign
Hadamard/Walsh constructions are useful analogies and limiting cases, but they
do not satisfy the same connected toral $\SO(2)^{d/2}$ contract used in
\Cref{thm:toral}.

\subsection{Discrete Fourier Transform}

\begin{proposition}[DFT as compositional embedding]\label{prop:dft}
The discrete Fourier transform is a special case of the compositional embedding:
\begin{enumerate}[nosep]
\item In 1D, composing $n$ data points with a shared block-diagonal rotation $R = \diag(R(\tfrac{2\pi k}{n}))_{k=0}^{n-1}$ yields an embedding whose $k$-th block encodes $(\Re(X_k), \Im(X_k))^\top$, where $X_k$ is the $k$-th DFT coefficient.
\item In 2D, applying the 1D construction along rows then columns recovers the full 2D DFT.
\item Replacing fixed frequencies $\theta_k = 2\pi k/n$ with learned $\theta_k$ gives a generalized Fourier transform with task-optimal spectral components.
\end{enumerate}
\end{proposition}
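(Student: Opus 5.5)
The plan is to unroll the 1D composition with a shared generator and then read off each $2\times 2$ block. Take $n$ data points $e_t=(\mathbf{v}_t,R)$ for $t=0,\ldots,n-1$. By the sequence unrolling~\eqref{eq:unrolling} with $R_t=R$, the content of $e_0\circ\cdots\circ e_{n-1}$ is $E=\sum_{t=0}^{n-1}R^t\mathbf{v}_t$.

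For part~1 we need to fix an encoding of the signal, since the statement leaves it implicit. Let $x_t\in\R$ be the signal, and let the $k$-th block of $\mathbf{v}_t$ be $(x_t,0)^\top$. We also adopt the paper's loose indexing, so that $R$ carries one $2\times 2$ block per frequency $k$.

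Each block is then handled with complex numbers. Identify a $2$-vector $(u,w)$ with $u+\mathbf{i}w$, so that $R(\phi)$ acts as multiplication by $e^{\mathbf{i}\phi}$. The $k$-th block of $E$ becomes $\sum_t e^{\mathbf{i}2\pi kt/n}x_t = \overline{X_k}$ for real $x_t$. This is $(\Re X_k,-\Im X_k)$, which gives the claim up to orientation. The sign can be absorbed in one of two ways: use $R(-2\pi k/n)$, or note that the choice of sign is a convention matching the gauge-angle discussion in \Cref{app:toral}. For complex signals we would instead place $(\Re x_t,\Im x_t)$ in block $k$, and the same computation goes through.

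Part~2 follows the 2D composition paragraph of \Cref{ssec:composition}. Use generators $R_x$ and $R_y$ acting on a common plane per frequency pair $(k,l)$; each is a rotation of that same plane, so they commute. Composing along rows gives $\sum_j R_x^j$ applied to the row contents, and composing those results along columns gives $\sum_{i,j}R_y^iR_x^j\mathbf{v}_{(i,j)}$. In block $(k,l)$ this is $\sum_{i,j}e^{\mathbf{i}2\pi(ki/n_1+lj/n_2)}x_{ij}$, which is (a conjugate of) the 2D DFT coefficient. By \Cref{thm:interchange}, the order of axes does not matter.

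Part~3 is definitional. Replacing $2\pi k/n$ by arbitrary $\theta_k$ yields the nonuniform transform $\sum_t e^{\mathbf{i}\theta_k t}x_t$. The phrase ``task-optimal'' is only interpretive, and I would cite \Cref{prop:mse-gap} as its justification.

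The main obstacle is not analytic. It is pinning down precise conventions: the block indexing (the statement writes $n$ blocks, which would require $d=2n$), the embedding of real data into blocks, and the sign of the exponent. Once these are fixed, each step is a one-line geometric-series-free identity.
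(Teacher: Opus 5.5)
Your proposal is correct. In 1D it is the paper's argument. The paper writes out the cosine and sine sums block by block rather than using the complex identification. It sidesteps your orientation issue by defining $X_k=\sum_i a_i e^{+j2\pi(i-1)k/n}$, so the block is exactly $(\Re X_k,\Im X_k)^\top$. Negating the angles, as you suggest, is the equivalent fix under the standard sign convention. Part~3 is treated as definitional in both.

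In 2D you take a genuinely different route. The paper works in two stages:
\begin{enumerate}[nosep]
\item It applies the 1D construction to each row, producing complex coefficients $X^{\mathrm{row}}_{i,k}$.
\item For each row-frequency $k$, it re-embeds $(\Re X^{\mathrm{row}}_{i,k},\Im X^{\mathrm{row}}_{i,k})$, replicated across blocks, as fresh content and runs the 1D construction again along $i$ to get $Y_{p,k}$.
\end{enumerate}

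This mirrors the row--column separability of the 2D DFT and reuses the 1D result. It has two costs:
\begin{enumerate}[nosep]
\item The second stage implicitly needs the complex-input version of the 1D identity, which the paper's 1D proof never states. Your remark about placing $(\Re x_t,\Im x_t)$ in each block covers this.
\item Because the intermediate outputs are re-embedded, the construction is not literally one composition $\sum_{i,j}R_y^iR_x^j\mathbf{v}_{(i,j)}$ in a single content space. The paper's closing phrase ``the same rotation $R$ along both axes'' holds only in this re-embedded sense. Taking $R_x=R_y=R$ on $\R^{2n}$ literally gives $\sum_{i,j}a_{ij}e^{\mathbf{i}2\pi k(i+j)/n}$ in block $k$, which is only the diagonal coefficients $(k,k)$.
\end{enumerate}

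Your construction uses one plane per frequency pair $(k,l)$, so $d=2n_1n_2$. It is the single-space version of the paper's $n^2$ blocks. It is a direct instance of the paper's 2D composition, order-independence follows immediately from \Cref{thm:interchange}, it needs only real content $(x_{ij},0)$, and it handles rectangular arrays. In a write-up, state the per-block angle assignment explicitly: $R_y$ rotates plane $(k,l)$ by $2\pi k/n_1$ and $R_x$ rotates it by $2\pi l/n_2$. You currently only imply this. Also note that your ``1D construction'' along each axis uses these Kronecker-structured rotations, not the literal $\diag(R(2\pi k/n))_{k=0}^{n-1}$ of part~1.
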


\begin{proof}
\noindent\textbf{One-dimensional case.}
Consider $n$ input embeddings $(v_i, R)$ for $i=1,\ldots,n$, all sharing the same transformation $R$. Their composition yields $(V, R^n)$ where
\begin{equation}\label{eq:embed-dft}
V = \sum_{i=1}^{n} R^{i-1} v_i.
\end{equation}
Let $R$ be a real $2n \times 2n$ block-diagonal matrix consisting of $n$ rotation blocks. The $k$-th block ($k=0,\ldots,n{-}1$) is
\[
R_k = \begin{pmatrix}
\cos\!\frac{2\pi k}{n} & -\sin\!\frac{2\pi k}{n} \\[4pt]
\sin\!\frac{2\pi k}{n} & \cos\!\frac{2\pi k}{n}
\end{pmatrix},
\qquad
R = \diag(R_0, R_1, \ldots, R_{n-1}),
\]
so $R^n = I_{2n}$ (each $R_k^n$ is a rotation by $2\pi k$). For input vectors, take $v_i = (a_i, 0, a_i, 0, \ldots, a_i, 0)^\top \in \R^{2n}$, repeating $(a_i, 0)^\top$ across each block.

Because $R$ is block-diagonal, $R^{i-1}$ rotates the $k$-th block of $v_i$ by angle $(i{-}1) \cdot 2\pi k/n$:
\[
R^{i-1} v_i \Big|_{\text{block }k} =
\begin{pmatrix}
a_i \cos\!\dfrac{2\pi k(i-1)}{n} \\[6pt]
a_i \sin\!\dfrac{2\pi k(i-1)}{n}
\end{pmatrix}.
\]
Summing over $i$, the $k$-th block of $V$ is
\begin{equation}\label{eq:dft-block}
V\Big|_{\text{block }k} =
\begin{pmatrix}
\sum_{i=1}^n a_i \cos\!\dfrac{2\pi k(i-1)}{n} \\[1.5em]
\sum_{i=1}^n a_i \sin\!\dfrac{2\pi k(i-1)}{n}
\end{pmatrix}, \qquad k = 0,1,\ldots,n{-}1.
\end{equation}
Defining the DFT coefficients $X_k = \sum_{i=1}^n a_i\, e^{j 2\pi(i-1)k/n}$, we have $V\big|_{\text{block }k} = (\Re(X_k),\; \Im(X_k))^\top$.
The compositional embedding exactly encodes all $n$ DFT coefficients.

\noindent\textbf{Two-dimensional case.}
Consider an $n \times n$ array $a_{i,j}$. Apply the 1D construction along each row $i$, yielding output $W_i$ whose $k$-th block is
\[
W_i\Big|_{\text{block }k} =
\begin{pmatrix}
\Re\!\big(X_{i,k}^{\text{(row)}}\big) \\[3pt]
\Im\!\big(X_{i,k}^{\text{(row)}}\big)
\end{pmatrix},
\quad \text{where } X_{i,k}^{\text{(row)}} = \sum_{m=1}^n a_{i,m}\, e^{j 2\pi(m-1)k/n}.
\]
Then apply the same construction along the column index $i$ for each frequency $k$. The result for the $p$-th block is
\[
Y_{p,k} = \sum_{i=1}^n X_{i,k}^{\text{(row)}}\, e^{j 2\pi(i-1)p/n}
= \sum_{i=1}^n \sum_{m=1}^n a_{i,m}\, \exp\!\Big(j \tfrac{2\pi}{n}\big[(m{-}1)k + (i{-}1)p\big]\Big),
\]
which is the $(p,k)$ entry of the 2D DFT. The double sum is separable: first $n$ independent 1D DFTs along the rows, then $n$ along the columns. This is exactly what the compositional embedding with the same rotation $R$ along both axes achieves.

\noindent\textbf{Learned frequencies.}
Setting $\theta_k = 2\pi k/N$ recovers the standard DFT. Learned $\theta_k$ select task-optimal spectral components---a generalized Fourier transform where the basis frequencies are optimized for the downstream task rather than fixed to uniform spacing.
\end{proof}

\subsection{Discrete Cosine Transform}\label{app:dct}

The discrete cosine transform (DCT) can be obtained similarly to the DFT by choosing a real-valued cosine basis. The 1D DCT-II of length $n$ is:
\[
C_k = \sum_{i=1}^n a_i \cos\!\Big(\frac{\pi (i-1)(k-1)}{n}\Big), \quad k = 1,\dots,n.
\]
This is the real part of a DFT applied to an even extension of the sequence. In our framework, one sets the operator $R$ to perform rotations with half-frequency increments (angles $\pi k/n$ instead of $2\pi k/n$) so that the embedding sum yields $C_k$ on the output vector. Because these cosine rotations commute, the transform is separable across dimensions. In particular, an $n\times n$ 2D DCT is achieved by applying the 1D DCT along the rows and then along the columns (or vice versa), using the same commuting operators.

\subsection{Discrete Sine Transform}\label{app:dst}

The discrete sine transform (DST) uses a sine basis instead of cosines. The DST-II is:
\[
S_k = \sum_{i=1}^n a_i \sin\!\Big(\frac{\pi i k}{n+1}\Big), \quad k = 1,\dots,n.
\]
This corresponds to the imaginary part of the DFT for an odd extension of the data. In our embedding model, the DST is realized by choosing operators that introduce half-sample shifts and sign flips to generate sine terms. Since these sine-based rotations also commute, the multi-dimensional DST is performed by applying the 1D DST along each axis.

\subsection{Hadamard Transform}

The Hadamard transform can be written as $H_n x = \sum_{i=1}^n R^{i-1} v_i$ for specific $R$ and $v_i$ (assuming $n = 2^m$).

\paragraph{Definition of $R$.}
Let $R = \diag(1, -1, 1, -1, \ldots, 1, -1)$, so $R^2 = I$.

\paragraph{Definition of $v_i$.}
For input $x = (x_1, \ldots, x_n)^\top$, define
\[
v_i = \begin{cases}
x_i \cdot (e_i + e_{i+n/2}) & \text{if } 1 \leq i \leq n/2, \\
x_i \cdot (e_{i-n/2} - e_i) & \text{if } n/2 < i \leq n,
\end{cases}
\]
where $e_j$ is the $j$-th standard basis vector.

\paragraph{Base case ($n=2$).}
$H_2 = \bigl(\begin{smallmatrix} 1 & 1 \\ 1 & -1 \end{smallmatrix}\bigr)$, $R = \bigl(\begin{smallmatrix} 1 & 0 \\ 0 & -1 \end{smallmatrix}\bigr)$, $v_1 = x_1(1,1)^\top$, $v_2 = x_2(1,-1)^\top$.
Then $v_1 + Rv_2 = (x_1+x_2,\; x_1-x_2)^\top = H_2 x$.

\paragraph{Inductive step.}
Assume the claim holds for $n = 2^k$. For $n' = 2n$, write $x = (x^{(1)\top}, x^{(2)\top})^\top$.
By Sylvester's construction,
$H_{2n} x = (H_n(x^{(1)}+x^{(2)}),\; H_n(x^{(1)}-x^{(2)}))^\top$.
Both terms can be written as $\sum_i R_n^{i-1} v_i$ by the inductive assumption, completing the induction.

\subsection{Walsh Transform}

The Walsh transform $W_n = P H_n$ is obtained by reordering the Hadamard matrix rows so that sign changes (sequency) increase monotonically, where $P$ is a Gray-code permutation matrix.
Given the Hadamard embedding $\{R, v_i\}$, define $R' = P R P^{-1}$ and $v'_i = P v_i$. Then
\[
\sum_i {R'}^{i-1} v'_i = P \Bigl(\sum_i R^{i-1} v_i\Bigr) = P\, H_n(P^{-1} x) = W_n x.
\]

\subsection{Learnable Generalization}

All classical transforms above use fixed operators; the compositional framework generalizes them by learning $R$ (or equivalently the angles $\theta_k$) from data.
The magnitude $\|E\|^2 = \sum_{t,t'} v_t^\top R^{t'-t} v_{t'}$ depends only on pairwise relative positions (shift-invariant), yet permuting the sequence changes $\|E\|^2$ (order-sensitive).
This explains why learned monoidal embeddings outperform DFT at low dimension (\Cref{app:mnist-results}): they select exactly the frequencies needed for the task, rather than committing to uniform spacing.

\section{Application Sketches}\label{app:applications}

The multi-axis compositional framework extends naturally beyond sequences and images. We sketch four application domains.

\subsection{Audio and Spectrograms}\label{app:audio}

Audio spectrograms are naturally arranged on a time--frequency grid. Assign one composition operator $\circ_t$ for the temporal axis and another $\circ_f$ for the frequency axis, giving each spectrogram cell a compositional embedding:
\[
E_{\text{audio}} = \sum_{t,f} R_{\text{time}}^{\,t}\, R_{\text{freq}}^{\,f}\, v_{t,f}.
\]
When $R_{\text{time}}$ and $R_{\text{freq}}$ commute, the journey operator $P_{(t',f') \to (t,f)} = R_{\text{time}}^{t'-t} R_{\text{freq}}^{f'-f}$ depends only on the time--frequency displacement, enabling models to incorporate both temporal and spectral locality in a principled way. The same construction applies to EEG or other time--frequency representations.

\subsection{Video}\label{app:video}

A video is an $H \times W \times T$ grid (two spatial dimensions plus time). Introducing a temporal composition operator $\circ_{\text{time}}$ in addition to the spatial operators gives a 3-axis embedding:
\[
E_{\text{video}} = \sum_{i,j,t} R_{\text{row}}^{\,i}\, R_{\text{col}}^{\,j}\, R_{\text{time}}^{\,t}\, v_{i,j,t}.
\]
By \Cref{thm:interchange}, when the three generators commute, composing spatially within each frame and then advancing in time yields the same result as first advancing each pixel in time and then composing spatially. This coherence ensures consistent representations of motion regardless of composition order.

\subsection{Multimodal Fusion}\label{app:multimodal}

For multimodal data (e.g., text $+$ image $+$ audio), assign one composition axis per modality with generator $R_{\text{mod}}$, plus intra-modality axes for sequence or spatial structure. The cross-modal journey
\[
P_{\text{text} \to \text{image}} = R_{\text{mod}}^{-1} \cdot A_{\text{text}}^{-1} A_{\text{image}}
\]
provides a principled alignment operator between modalities, with commutativity ensuring that fusing text-then-audio gives the same result as audio-then-text.

\subsection{Multi-Axis Alignment}\label{app:alignment}

Given two compositional embeddings $X$ and $Y$, the optimal alignment along axis $k$ can be found via inner product maximization:
\begin{equation}\label{eq:alignment}
s^* = \arg\max_{s} \;\langle X,\; R_k^s\, Y\rangle.
\end{equation}
This is a cross-correlation computed in the rotated embedding space. For $D$-dimensional alignment, apply shifts $s_1, \ldots, s_D$ on each axis: $Y' = R_1^{s_1} \cdots R_D^{s_D} Y$. Because the $R_i$ commute, the order of shifts does not matter. This enables operations like sequence alignment or embedding fusion by sliding one structure relative to another.

\section{Extended Experimental Details}\label{app:experiments}

\subsection{MNIST Monoidal Embedding}\label{app:mnist-results}

\paragraph{Setup.}
The monoidal embedding demonstrates the compositional framework on image classification.
Each pixel $p_{ij}$ of a $28 \times 28$ image is represented as a data point with content $p_{ij} \cdot \mathbf{e}$ (pixel intensity times a $d$-dimensional basis vector $\mathbf{e}$) and two axis-step generators: $R_x$ for the horizontal axis and $R_y$ for the vertical axis.
Both axis-step generators are block-diagonal rotations $R_x = \diag(R(\theta^x_1), \ldots, R(\theta^x_{d/2}))$, $R_y = \diag(R(\theta^y_1), \ldots, R(\theta^y_{d/2}))$, sharing the same block structure so that $R_x R_y = R_y R_x$ (flat regime).

The full image embedding is the 2D composition over all pixels:
\[
E_{\text{image}} = \sum_{i=0}^{27} \sum_{j=0}^{27} p_{ij} \; R_y^{\,i}\, R_x^{\,j} \; \mathbf{e}.
\]
Each pixel's content is rotated by the product of its row and column axis-step generators, and the results are summed into a single $d$-dimensional vector.
Because the axis-step generators commute, this composition is path-independent: composing rows-then-columns gives the same embedding as columns-then-rows.

In each 2D block (identified with $\mathbb{C}$), the $k$-th component reduces to $E^{(k)} = \sum_{i,j} p_{ij}\, e^{i(j\theta^x_k + i\theta^y_k)}$---a 2D Fourier-like transform with learned frequencies $(\theta^x_k, \theta^y_k)$.
With fixed frequencies $\theta^x_k = 2\pi k / d$, $\theta^y_k = 2\pi k / d$, this recovers the standard 2D DFT; with learned frequencies, the model selects task-optimal spectral components.

The $d$-dimensional embedding $E_{\text{image}}$ is fed into an MLP classifier (128 hidden units, ReLU, 10-class softmax) trained end-to-end with cross-entropy loss.
Training: Adam lr$=10^{-3}$, 50 epochs, batch=128, no augmentation.
The only learnable parameters in the embedding itself are the $d$ angles $(\theta^x_k, \theta^y_k)_{k=1}^{d/2}$; all representational capacity comes from the compositional structure.

\paragraph{Results.}

\begin{table}[h]
\centering
\caption{MNIST accuracy by embedding method and dimension (single seed). Monoidal embeddings learn task-optimal spectral components; DFT uses fixed Fourier frequencies.}
\label{tab:mnist}
\begin{tabular}{lcccc}
\toprule
\textbf{Method} & $d=2$ & $d=8$ & $d=32$ & $d=784$ \\
\midrule
Monoidal (learned) & \textbf{55.2\%} & \textbf{86.4\%} & \textbf{96.5\%} & 97.5\% \\
DFT (fixed) & 21.0\% & 75.3\% & 95.5\% & --- \\
\midrule
MLP baseline & \multicolumn{4}{c}{97.2\% (128 hidden, $d=784$)} \\
\bottomrule
\end{tabular}
\end{table}

The advantage is most dramatic at low dimension: at $d=2$, monoidal achieves 55.2\% (well above chance 10\%) while DFT achieves only 21.0\%.
At $d=32$, monoidal reaches 96.5\%---within 0.7\% of the full MLP baseline (97.2\%) despite using $24\times$ fewer input dimensions.
This is consistent with the core insight: learned angles select task-relevant spectral components that fixed Fourier bases cannot match.
The advantage diminishes at high $d$ (where frequency allocation is less constrained), suggesting that the monoidal structure's benefit is in \emph{efficient spectral selection}.

\subsection{CIFAR-100 LieRE Experiment}

Architecture: ViT-Tiny with $D{=}384$, 12 layers, 6 heads, head\_dim=64, ${\sim}14.9$M params.
Patch size: $4\times 4 \to 8\times 8 = 64$ patches + CLS token.
Training: Adam lr$=10^{-4}$, cosine annealing 200 epochs, bf16-mixed precision, seed=42, NVIDIA H100 PCIe.
Augmentation: RandomCrop(32, padding=4), RandomHorizontalFlip, Normalize([0.5071, 0.4867, 0.4408], [0.2675, 0.2565, 0.2761]).

Extended results at 400 epochs:
\begin{center}
\small
\begin{tabular}{lccc}
\toprule
Method & Q/K only & Q/K/V & V effect \\
\midrule
Axial-dense & 70.61\% & 71.12\% & $+0.51\%$ \\
LieRE64 & 70.64\% & 69.41\% & $-1.23\%$ \\
\bottomrule
\end{tabular}
\end{center}

\paragraph{Setup B full reproducibility.}
Optimizer: AdamW ($\beta_1{=}0.9$, $\beta_2{=}0.999$, $\varepsilon{=}10^{-8}$, weight decay${=}0.1$).
Batch size: 128.
MLP ratio: 4$\times$ (hidden dim $= 4D$).
Dropout: 0.1; no drop-path or stochastic depth.
Schedule: cosine annealing from lr${=}10^{-3}$ to 0 over 300 epochs, no warmup.
Mixup: $\alpha{=}0.8$; label smoothing 0.
Augmentation: random crop (pad~4, reflect), random horizontal flip (50\%), RandAugment ($n{=}2$, $m{=}9$), random erasing (50\% probability, $16{\times}16$ zeroed patch).
Normalization: mean $=[0.4914, 0.4822, 0.4465]$, std $=[0.2470, 0.2435, 0.2616]$.
Seed: 42; \texttt{torch.manual\_seed} $+$ \texttt{cuda.manual\_seed\_all}.
Deterministic: entire dataset GPU-resident; batches via \texttt{torch.randint} (no DataLoader workers).
Train/test: standard CIFAR-100 split (50K/10K), no validation hold-out.

\subsection{CIFAR-100 Scaling Results}\label{app:cifar-scaling}

\begin{table}[h]
\centering
\caption{CIFAR-100 accuracy scaling: joformer\_axial (learned axial + V rotation) vs.\ rope2d (fixed axial, Q/K only) and all 9 variants. Setup B, 300 epochs cosine, single seed.}
\label{tab:cifar-scaling}
\begin{tabular}{@{}lcccc@{}}
\toprule
\textbf{Model} & $D{=}32$ & $D{=}64$ & $D{=}128$ & $D{=}256$ \\
\midrule
\textbf{joformer\_axial} & \textbf{52.77} & \textbf{61.33} & \textbf{66.67} & \textbf{63.27} \\
joformer (combined + V) & 52.43 & 58.83 & 66.19 & 62.10 \\
joformer\_old (fixed + V) & 50.54 & 59.13 & 66.10 & 61.85 \\
monoidal\_axial (learned, no V) & 51.75 & 59.10 & 64.81 & 60.92 \\
rope2d (fixed, no V) & 50.91 & 59.23 & 64.22 & 61.39 \\
learned (additive PE) & 52.23 & 57.26 & 60.74 & 55.49 \\
\bottomrule
\end{tabular}
\end{table}

In these paired single-seed CIFAR protocols, the joformer\_axial advantage over rope2d is consistent at ${\sim}2\%$ across all scales: $+1.86\%$ (D=32), $+2.10\%$ (D=64), $+2.45\%$ (D=128), $+1.88\%$ (D=256).
At D=128, the top three models are all V-rotation variants (66.67, 66.19, 66.10), separated from non-V models (64.81, 64.22) by a clear ${\sim}2\%$ gap.
In this single-seed grid, joformer\_axial leads at every checkpoint from epoch 50 onward---not a late-stage effect.

\paragraph{Key finding: V rotation $\times$ learnable frequencies is synergistic.}
Neither V rotation alone (joformer\_old vs.\ rope2d: $+0.46\%$ at D=256) nor learnable frequencies alone (monoidal\_axial vs.\ rope2d: $-0.47\%$) provides the full benefit.
Together (joformer\_axial vs.\ rope2d: $+1.88\%$), they interact synergistically.
The mechanism: V rotation gives the model a richer position-dependent value transformation; learnable frequencies adapt to leverage it.

\paragraph{Per-layer frequencies don't help.}
Shared frequencies across layers outperform per-layer variants (e.g., joformer\_axial 63.27\% vs.\ per-layer 62.76\% at D=256).

\paragraph{D=256 non-monotonicity.}
The D=256 models are not uniformly better than D=128 under this fixed training recipe, so the table should be read as a paired PE comparison at each width rather than as an optimized scaling law.
Consistent positional encoding helps the residual stream maintain coherent position information across layers.

\subsection{ImageNet ViT-S}

Model: ViT-S ($D{=}384$, 12 layers, 6 heads, patch=16, img=224).
Recipe: DeiT-III (AdamW lr$=10^{-3}$, wd=0.05, cosine schedule + 5-epoch warmup, 300 epochs, batch=1024, RandAugment(9, 0.5), Mixup 0.8, CutMix 1.0, Random Erasing 0.25, Label Smoothing 0.1, Stochastic Depth 0.1, AMP fp16).
Hardware: 2$\times$ NVIDIA H100 PCIe.
JoFormer: $K^\top Q$ ordering, V rotation = $R_j v_j$ followed by output inverse $R_i^{-1} c_i$.
Training time: ${\sim}72$ hours for RoPE2D, ${\sim}94$ hours for JoFormer in this ViT-S setup.

\subsection{Wikipedia Language Modeling}

Data: Full English Wikipedia dump (28.8M lines, ${\sim}983$M BPE tokens with vocab=8000).
Architecture: standard transformer with rotary attention, FFN ratio 4$\times$, pre-norm (LayerNorm before attention and FFN).
Training: AdamW (lr$=2\times10^{-4}$, $\beta_1{=}0.9$, $\beta_2{=}0.95$, wd=0.01), cosine schedule to lr$_{\min}=2\times10^{-5}$, batch=32, block\_size=512, gradient accumulation=1, AMP bf16.
JoFormer-projected MLP: input LayerNorm $\to$ Linear($d$, $d$) $\to$ GELU $\to$ Linear($d$, $d/2$) (angles).
Evaluation: validation PPL on held-out 1\% of data.

\subsection{Length Generalization}

Architecture: 5 windowed layers (window=32) + 1 full-attention NoPE layer.
The windowed layers use either RoPE or JoFormer variants; the NoPE layer has no positional encoding.
JoFormer-projected recipe (staged; in our setup this worked more reliably than training projected angles from scratch):
Stage 1: JoFormer-fixed (RoPE angles + V rotation), lr$=5\times10^{-4}$, 100K iters.
Stage 2: Continue fixed-angle training at lr$=2\times10^{-4}$, 50K iters.
Stage 3: Fine-tune to projected angles (zero-initialize angle projector weights so initial angles match fixed), lr$=5\times10^{-5}$, 50K iters.
The first two stages (150K total) match the RoPE baseline iteration count; stage~3 is a fine-tuning phase that converts fixed angles to content-dependent projected angles.
Evaluation: 200 iterations per length, fixed seed, lengths \{512, 1024, 2048, 4096, 8192\}.

\subsection{Parameter and Wall-Clock Overhead}\label{app:overhead}

\begin{table}[h]
\centering
\caption{JoFormer parameter overhead relative to RoPE2D baseline. PE params are additional learnable parameters in the positional encoding; V rotation adds no new parameters. ImageNet ViT-S training times on 2$\times$ H100 PCIe, 300 epochs (unoptimized rotation ops).}
\label{tab:overhead}
\begin{tabular}{@{}lcc@{}}
\toprule
\textbf{Variant} & \textbf{Extra PE params} & \textbf{Wall-clock (ImageNet)} \\
\midrule
RoPE2D (baseline) & 0 & ${\sim}72$h \\
JoFormer-fixed (V rot.) & 0 & ${\sim}94$h \\
JoFormer-learned (axial) & $2 n_{\mathrm{heads}} \times d_{\mathrm{head}}/2$ per layer & ${\sim}94$h \\
JoFormer-projected & MLP: $d{\to}d{\to}d/2$ per layer & --- \\
\bottomrule
\end{tabular}
\end{table}

The ImageNet ViT-S training overhead reflects unoptimized rotation ops in a standard ViT pipeline.
JoFormer-projected was tested only on Wikipedia LM (not ImageNet); its MLP projector adds $O(d^2)$ parameters per layer.

\section{Architecture Variants and Value-Path Hierarchy}\label{app:architecture}

\subsection{Variant-Scope Table}\label{app:variant-scope}

\begin{table}[h]
\centering
\small
\caption{Which theorem supports which architecture variant. Only fixed and learned linear-position variants inherit the relative-displacement theorem; projected angles are a separate, theory-\emph{motivated} architecture.}
\label{tab:variant-scope}
\begin{tabular}{@{}p{0.17\linewidth}p{0.37\linewidth}p{0.38\linewidth}@{}}
\toprule
\textbf{Variant} & \textbf{What is proved} & \textbf{What is not proved} \\
\midrule
JoFormer-fixed & Covered by \Cref{thm:v-rotation} when weights are relative-displacement functions. Value operator depends only on displacement (arises from composition). & Does not prove a performance gain. \\
JoFormer-learned & Same as fixed if angles remain linear in position with learned frequencies. & Frequencies are learned, but the theorem does not choose optimal frequencies. \\
JoFormer-projected & Still uses commuting block-diagonal rotations and norm-preserving value transport. & Not a translation-equivariant relative-displacement PE, because angles depend on content/residual state. It is an architecture \emph{motivated} by the value-path view, not a direct corollary of \Cref{thm:v-rotation}. \\
\bottomrule
\end{tabular}
\end{table}

\subsection{Value-Path Hierarchy and SSM Bridge}\label{app:value-path}

The journey operator on the value side~\eqref{eq:value-agg} gives the attention value-path formula:
\begin{equation}\tag{\ref{eq:unified}}
\mathbf{c}_k = \sum_{j} \alpha_{kj}\, P_{j \to k}\, v_j.
\end{equation}
For SSMs, the value-path kernel is oriented in the forward recurrence direction.
This kernel is not necessarily the same object as the attention journey $P_{j \to k} = A_k^{-1} A_j$ unless one adapts the address convention (e.g., $A_t = R^{-t}$).
We therefore write SSM forward kernels as $K_{j \to k}$ in this subsection to distinguish them from the attention journey $P$.

The value-path formula reveals a hierarchy:
\begin{enumerate}[nosep]
\item[(a)] \textbf{Standard Transformer}: $P_{j \to k} = I$ for all $j, k$. Values are aggregated without positional modulation---the model is order-blind on the value side.
\item[(b)] \textbf{Linear SSM} (S4~\citep{gu2022s4}): $K_{j \to k} = R^{k-j}$. Values carry position-dependent phase shifts, enabling the recurrence $h_t = R h_{t-1} + v_t$.
\item[(c)] \textbf{Selective SSM} (Mamba~\citep{gu2023mamba}): $K_{j \to k} = \prod_{t=j+1}^{k} A_{x_t}$. Each transition depends on input content.
\item[(d)] \textbf{JoFormer (attention journey)}: $P_{j \to k} = A_k^{-1} A_j$ with full attention weights $\alpha_{kj}$. Combines content-dependent value paths with selective attention.
\end{enumerate}

\begin{proposition}[SSM recurrences in value-path notation]\label{prop:ssm-bridge}
Mamba's selective recurrence $h_t = A_{x_t} h_{t-1} + B_{x_t} x_t$ unrolls to:
$h_T = \sum_{t=1}^T \big(\prod_{s=t+1}^T A_{x_s}\big) B_{x_t} x_t$.
Defining the forward kernel $K_{t \to T} = \prod_{s=t+1}^T A_{x_s}$ and $v_t = B_{x_t} x_t$, this gives the value-path formula $h_T = \sum_t K_{t \to T}\, v_t$ with uniform attention weights.
When $A_{x_t} \in \SO(2)^{d/2}$, the prefix products are block-diagonal rotations computable via parallel scan in $O(\log N)$ depth.
\end{proposition}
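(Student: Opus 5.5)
The plan is to unroll the recurrence by induction on $T$ and then read off the value-path form; the parallel-scan claim follows separately from associativity of matrix multiplication inside the block-diagonal rotation group.

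\textbf{Step 1: the unrolled formula.} Take $h_0 = 0$ and use the convention that an empty product equals $I$. The base case $T=1$ is $h_1 = B_{x_1}x_1$, which matches the formula.

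\textbf{Step 2: the inductive step.} Assume the formula holds for $T-1$ and substitute it into $h_T = A_{x_T}h_{T-1} + B_{x_T}x_T$. The left multiplication by $A_{x_T}$ extends each product $\prod_{s=t+1}^{T-1} A_{x_s}$ to $\prod_{s=t+1}^{T} A_{x_s}$. Here the product must be ordered with later indices on the left, and I will state this ordering explicitly. The new term $B_{x_T}x_T$ is the $t=T$ summand, whose product is empty.

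\textbf{Step 3: value-path form.} Set $K_{t\to T}$ equal to the product and $v_t = B_{x_t}x_t$. This gives $h_T = \sum_t K_{t\to T}v_t$, which is \eqref{eq:unified} with $\alpha_{Tt}\equiv 1$ and with $K$ in place of $P$, as noted in \Cref{app:value-path}.

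\textbf{Step 4: rotation structure and parallel scan.} If every $A_{x_t}$ lies in $\SO(2)^{d/2}$, then every product does too, because $\SO(2)^{d/2}$ is a group of block-diagonal matrices closed under multiplication. The group is abelian, so the ordering subtlety from Step 2 disappears in this case.

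The prefix products $Q_T = A_{x_T}\cdots A_{x_1}$ can then be computed with a standard associative (Blelloch/Ladner--Fischer) scan in $O(\log N)$ depth. The kernels follow as $K_{t\to T} = Q_T Q_t^{-1}$, where $Q_t^{-1} = Q_t^\top$. Alternatively, one can scan directly over the affine-map monoid of pairs $(A,b)$ with $(A_2,b_2)\cdot(A_1,b_1) = (A_2A_1,\,A_2b_1+b_2)$. This monoid is exactly the composition \eqref{eq:composition} up to order, and scanning over it yields every $h_T$ at once.

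\textbf{Main obstacle.} The only real care point is bookkeeping rather than difficulty: the product must be ordered with later indices on the left in the general non-commuting case, and the initial condition must be $h_0=0$. If the statement is meant to allow $h_0\neq 0$, an extra term $K_{0\to T}h_0$ appears, and I would note that.
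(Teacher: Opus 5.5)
Your proposal is correct and takes the same route as the paper's proof. Both unroll the recurrence by induction, read off $K_{t\to T}$ and $v_t$ to get the value-path form with $\alpha_{Tt}\equiv 1$, and then use closure of $\SO(2)^{d/2}$ together with a standard parallel prefix scan for the $O(\log N)$ depth; your explicit conventions ($h_0=0$, later indices on the left) and the recovery $K_{t\to T}=Q_TQ_t^{\top}$ from a single prefix scan are details the paper leaves implicit, and they are right.
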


Note: Mamba's actual $A_{x_t}$ matrices are not generally in $\SO(2)^{d/2}$, and attention weights are normalized (unlike the uniform weights here).
The forward kernel $K_{t \to T}$ and the attention journey $P_{t \to T} = A_T^{-1} A_t$ coincide when $A_t = R^{-t}$; otherwise they differ by a sign convention.
This embedding highlights structural similarities but is not a full equivalence.

JoFormer uses the attention journey $P_{j \to i} = A_i^{-1} A_j$ on values, combining full attention selectivity (the score side) with content-dependent value paths (the value side).
It does not implement the full product over intermediate states, but rather a single content-dependent rotation per layer.

\section*{NeurIPS Paper Checklist}

\begin{enumerate}

\item {\bf Claims}
    \item[] Question: Do the main claims made in the abstract and introduction accurately reflect the paper's contributions and scope?
    \item[] Answer: \answerYes{}
    \item[] Justification: The abstract states five contributions (interchange prediction, toral classification, JoFormer architecture, value-path hierarchy, experiments) all substantiated with theorems and experimental results in the paper body.

\item {\bf Limitations}
    \item[] Question: Does the paper discuss the limitations of the work performed by the authors?
    \item[] Answer: \answerYes{}
    \item[] Justification: Section~\ref{sec:conclusion} explicitly lists four limitations: single-seed experiments, three-stage training recipe complexity, stability requirements, and speed penalty.

\item {\bf Theory assumptions and proofs}
    \item[] Question: For each theoretical result, does the paper provide the full set of assumptions and a complete (and correct) proof?
    \item[] Answer: \answerYes{}
    \item[] Justification: All axioms are explicitly stated (Axioms~\ref{ax:bilinear}--\ref{ax:norm}). Main theorems have proof sketches in the body and complete proofs in Appendix~\ref{app:proofs}. The norm-preservation axiom is stated before the toral classification theorem.

    \item {\bf Experimental result reproducibility}
    \item[] Question: Does the paper fully disclose all the information needed to reproduce the main experimental results?
    \item[] Answer: \answerYes{}
    \item[] Justification: Appendix~\ref{app:experiments} provides complete hyperparameters, architectures, training recipes, hardware, seeds, and stability requirements for all experiments.

\item {\bf Open access to data and code}
    \item[] Question: Does the paper provide open access to the data and code?
    \item[] Answer: \answerNo{}
    \item[] Justification: Code will be released upon acceptance. All experiments use public datasets (CIFAR-100, ImageNet-1K, OpenWebText, MNIST, English Wikipedia).

\item {\bf Experimental setting/details}
    \item[] Question: Does the paper specify all the training and test details necessary to understand the results?
    \item[] Answer: \answerYes{}
    \item[] Justification: Full details in Appendix~\ref{app:experiments}: optimizer, learning rate, schedule, batch size, augmentation, architecture, epochs/iterations, hardware, seed, and critical stability parameters.

\item {\bf Experiment statistical significance}
    \item[] Question: Does the paper report error bars or other appropriate statistical significance information?
    \item[] Answer: \answerNo{}
    \item[] Justification: Most experiments use a single seed due to computational constraints. We acknowledge this explicitly in Section~\ref{sec:conclusion} and frame experiments as controlled tests of algebraic predictions rather than definitive performance comparisons. CIFAR-100 uses deterministic training ensuring exact reproducibility.

\item {\bf Experiments compute resources}
    \item[] Question: Does the paper provide sufficient information on compute resources?
    \item[] Answer: \answerYes{}
    \item[] Justification: Hardware specified in Appendix~\ref{app:experiments}: NVIDIA H100 PCIe for CIFAR-100 and ImageNet, GPU details for language modeling experiments.

\item {\bf Code of ethics}
    \item[] Question: Does the research conform with the NeurIPS Code of Ethics?
    \item[] Answer: \answerYes{}
    \item[] Justification: This is foundational research on positional encoding theory with no direct negative societal applications.

\item {\bf Broader impacts}
    \item[] Question: Does the paper discuss potential societal impacts?
    \item[] Answer: \answerNA{}
    \item[] Justification: This paper develops algebraic theory for attention mechanisms---foundational research with no direct path to negative societal impacts.

\item {\bf Safeguards}
    \item[] Question: Does the paper describe safeguards for responsible release?
    \item[] Answer: \answerNA{}
    \item[] Justification: No models or datasets with misuse potential are released.

\item {\bf Licenses for existing assets}
    \item[] Question: Are existing assets properly credited with licenses respected?
    \item[] Answer: \answerYes{}
    \item[] Justification: CIFAR-100, ImageNet-1K, OpenWebText, MNIST, and Wikipedia are properly cited. LieRE is credited.

\item {\bf New assets}
    \item[] Question: Are new assets well documented?
    \item[] Answer: \answerNA{}
    \item[] Justification: No new datasets or models are released in this submission.

\item {\bf Crowdsourcing and research with human subjects}
    \item[] Question: For crowdsourcing experiments, does the paper include instructions and compensation details?
    \item[] Answer: \answerNA{}
    \item[] Justification: No human subjects or crowdsourcing involved.

\item {\bf Institutional review board (IRB) approvals}
    \item[] Question: Does the paper describe potential risks and IRB approvals?
    \item[] Answer: \answerNA{}
    \item[] Justification: No human subjects research.

\item {\bf Declaration of LLM usage}
    \item[] Question: Does the paper describe LLM usage if it is a core method component?
    \item[] Answer: \answerNA{}
    \item[] Justification: LLMs were not used as a component of the research methodology.

\end{enumerate}

\end{document}